\documentclass{article} 
\usepackage{arxiv_2027,times}

\usepackage{amsmath,amsfonts,bm}

\def\eqref#1{equation~\ref{#1}}

\def\1{\bm{1}}

\DeclareMathAlphabet{\mathsfit}{\encodingdefault}{\sfdefault}{m}{sl}
\SetMathAlphabet{\mathsfit}{bold}{\encodingdefault}{\sfdefault}{bx}{n}

\usepackage{graphicx}
\usepackage{amssymb}
\usepackage{amsmath}
\usepackage{booktabs}
\usepackage{float}
\usepackage{xcolor}
\usepackage{placeins}
\usepackage{flafter}
\usepackage{hyperref}
\usepackage{url}

\newcommand{\structtag}[1]{\texttt{\textless#1\textgreater}}

\newtheorem{proposition}{Proposition}[section]
\newenvironment{proof}{\par\noindent\textit{Proof.}\ }{\hfill$\square$\par}

\newcommand{\paperfigure}[4][]{%
  \IfFileExists{#2}{%
    \includegraphics[#1]{#2}%
  }{%
    \begingroup
    \setlength{\fboxsep}{0pt}%
    \fbox{\parbox[c][#4][c]{#3}{\centering\scriptsize
      Missing figure source}}%
    \endgroup
  }%
}

\title{CoEvolve: Construct-to-Edit Visual\\Grounding with Bidirectional\\State Refinement}

\author{
Dongwei Sun\textsuperscript{1}\thanks{Equal contribution.},
Yujie Zhang\textsuperscript{2}\footnotemark[1],
Bowen Yao\textsuperscript{2},
Pei Liu\textsuperscript{1},
Jing Yao\textsuperscript{3},
Xiangyong Cao\textsuperscript{1}\thanks{Corresponding author.}
\\[6pt]
\textsuperscript{1}
School of Computer Science and Technology and the Ministry of Education Key Lab for Intelligent \\ Networks and Network Security,
Xi'an Jiaotong University, Xi'an 710049, China
\\[3pt]
\textsuperscript{2}
School of Computer Science and Technology, Faculty of Electronic and Information Engineering,\\
Xi'an Jiaotong University, Xi'an 710049, China
\\[3pt]
\textsuperscript{3}
State Key Laboratory of Remote Sensing and Digital Earth, Aerospace Information Research Institute,\\
Chinese Academy of Sciences, Beijing 100094, China
\\[5pt]
\texttt{\{sundongwei,
zhang\_yujie2005,
BowenYao2007\}@outlook.com}
\\
\texttt{lpei53682@gmail.com,
jasonyao92@gmail.com,
caoxiangyong@xjtu.edu.cn}
}

\begin{document}

\maketitle

\begin{abstract}
Visual grounding localizes an object described by language with a bounding box. Most multimodal grounding models compress target identification, spatial reasoning, and boundary estimation into one terminal prediction. Free-form rationales make reasoning linguistically explicit but do not necessarily expose measurable, editable spatial states. Intermediate localization errors are therefore difficult to diagnose and correct, allowing incorrect region choices and imprecise boundaries to persist in the final box.
We introduce \textbf{CoEvolve}, a construct-to-edit framework that separates grounding into explicit state construction and state editing. Region-Evolution Reinforcement (RER) organizes grounding analysis into a progressive semantic--spatial trajectory, with each reasoning step committing to an explicit candidate region. Bidirectional Denoising Refiner (BDR) treats the reasoning text as fixed semantic context and refines the trajectory's coordinate fields through bidirectional same-position reconstruction. Geometry- and behavior-level objectives provide target geometry and edit-preference signals for consolidating reliable candidates, preserving accurate inputs, or correcting toward annotations.
Evaluations cover natural-image and remote-sensing grounding. With a 9B backbone, CoEvolve rivals models up to 241B parameters in grounding accuracy. Under controlled corruption, a single BDR pass improves mean box overlap by over 27 percentage points, demonstrating strong recovery from substantial localization errors. State-source comparisons further support the complementarity of explicit state construction and source-matched editing. The project is at \url{https://sundongwei.github.io/CoEvolve_Project/}.

\end{abstract}

\section{Introduction}

Visual grounding, or referring expression comprehension, localizes a language-described object with a bounding box. The field has progressed from proposal-based and modular grounding~\citep{kazemzadeh2014referitgame,mao2016generation,yu2016modeling,yu2018mattnet,zhang2018variational,liu2019erase,yang2019dynamic,liu2019nmtree} to end-to-end Transformers and grounded pre-training~\citep{deng2021transvg,kamath2021mdetr,li2022glip,yan2023uninext,liu2024groundingdino,wang2024onepeace}, and more recently to multimodal large language models (MLLMs) that unify language understanding and coordinate generation~\citep{you2024ferret,wang2024qwen2vl,wang2024cogvlm,bai2025qwen3vl,bai2025qwen25vl,wang2025internvl35,wu2024deepseekvl2}. Despite stronger semantic representations, most approaches still compress target disambiguation, spatial search, and boundary estimation into one terminal box. Without an explicit spatial state for inspection and repair, errors in region selection and boundary estimation can persist into the final prediction.
\begin{figure}[!ht]
    \centering
    \includegraphics[width=\linewidth]{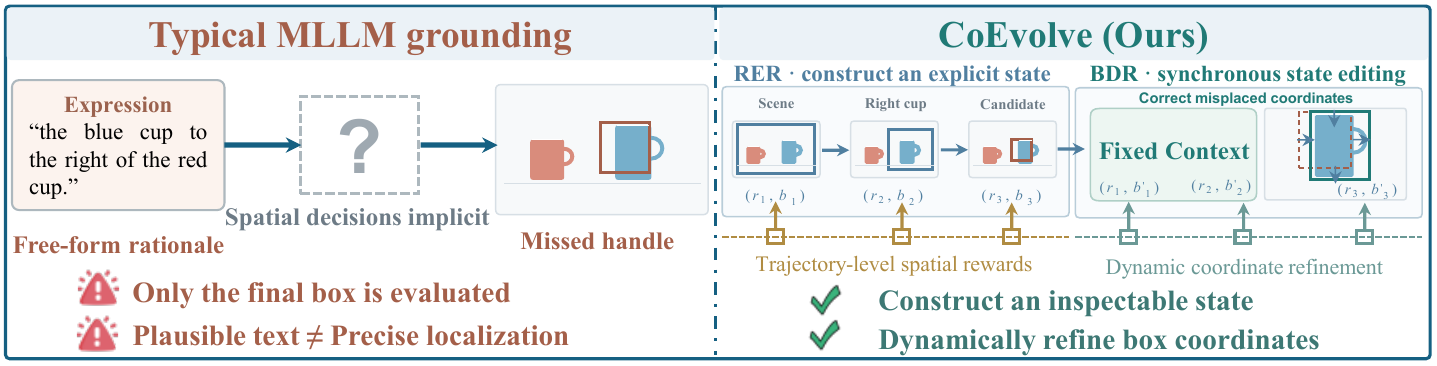}
    \caption{Comparison between conventional MLLM grounding and CoEvolve. Conventional methods compress spatial reasoning and localization into a terminal prediction, leaving intermediate spatial decisions implicit and difficult to correct. CoEvolve uses RER to construct a measurable progressive region trajectory and BDR to synchronously refine its coordinate fields while retaining the surrounding response structure.}
    \label{fig:compare}
\end{figure}
Reasoning-oriented systems expose rationales before spatial prediction~\citep{guo2025deepseekr1,huang2025visionr1,liu2025segzero}, and grounded reasoning methods further pair language with explicit visual regions or coordinates~\citep{fan2025grit,wu2026gcot}. Iterative localization has also used reinforcement-guided region search~\citep{sun2021iterative}, including anchor-guided reasoning in remote sensing~\citep{wang2026geosearcher}. These advances motivate a further question: how can candidate target regions form a measurable state that is passed to an editor, so that coordinate changes can be evaluated against the observed state and accurate regions can be preserved?

We introduce \textbf{CoEvolve}, a construct-to-edit framework that separates grounding into explicit state construction and state editing, as illustrated in Fig.~\ref{fig:compare}. \textbf{Region-Evolution Reinforcement (RER)} externalizes otherwise free-form grounding analysis into a progressive semantic--spatial trajectory in which every reasoning step commits to an explicit region. Trajectory-level spatial rewards derived from terminal annotations adapt the model to this structured interface without human-authored \mbox{reasoning} chains or intermediate region labels. \textbf{Bidirectional Denoising Refiner (BDR)} retains the reasoning text as semantic context and edits the trajectory's coordinate fields through bidirectional same-position reconstruction. Geometry-level supervision provides target geometry, while behavior-level supervision provides preferences for consolidating a reliable candidate, preserving an accurate input, or correcting toward the annotation.

Experiments evaluate localization accuracy and state refinement. CoEvolve reaches 92.41\% Overall $\mathrm{Acc}@0.5$ on eight RefCOCO-family splits and 85.72\% mIoU on five test splits. Under controlled corruption, BDR raises mIoU from 58.80\% to 86.19\%, with 87.99\% $\mathrm{Repair}@0.7$. Domain-specific retraining achieves 76.23\% mIoU on DIOR-RSVG. Our contributions are:

\begin{itemize}
    \item We introduce CoEvolve and its RER component to expose intermediate localization decisions as a parseable semantic--spatial trajectory, providing an explicit state for inspection and correction.
    \item We develop BDR to edit the coordinate fields of this state through position-aligned same-position reconstruction. Geometry- and behavior-level objectives jointly provide target geometry and edit-preference signals for state refinement.
    \item We demonstrate strong target localization across natural-image and remote-sensing benchmarks, together with substantial correction gains under controlled box corruption.
\end{itemize}

\section{Preliminaries}
\label{sec:preliminaries}

\subsection{Group Relative Policy Optimization}
\label{sec:prelim_grpo}

Let $c=(I,q)$ denote a multimodal context consisting of an image $I$ and a referring expression $q$, and let $y=(y_1,\ldots,y_T)$ be a response containing the reasoning trajectory and final coordinates. An autoregressive policy factorizes the response likelihood as
\begin{equation}
\pi_\theta(y\mid c)
=
\prod_{t=1}^{T}
\pi_\theta(y_t\mid c,y_{<t}).
\label{eq:ar_policy}
\end{equation}

Group Relative Policy Optimization (GRPO)~\citep{shao2024deepseekmath} removes the learned critic by sampling a group $\{y_i\}_{i=1}^{G}\sim\pi_{\theta_{\mathrm{old}}}(\cdot\mid c)$ and normalizing sequence-level rewards $R_i$ within the group:
\begin{equation}
A_i
=
\frac{R_i-\mu_R}{\sigma_R+\delta},
\qquad
\mu_R
=
\frac{1}{G}\sum_{i=1}^{G}R_i,
\label{eq:group_advantage}
\end{equation}
where $\sigma_R$ is the corresponding standard deviation and $\delta$ ensures numerical stability. With the token-wise importance ratio
\[
\rho_{i,t}(\theta)
=
\frac{
    \pi_\theta(y_{i,t}\mid c,y_{i,<t})
}{
    \pi_{\theta_{\mathrm{old}}}(y_{i,t}\mid c,y_{i,<t})
},
\]
the policy is optimized by
\begin{align}
\mathcal{J}_{\mathrm{GRPO}}(\theta)
=\mathbb{E}\!\Bigg[&\frac{1}{G}\sum_{i=1}^{G}\frac{1}{T_i}
\sum_{t=1}^{T_i}\min\!\Big\{\rho_{i,t}(\theta)A_i,\nonumber\\[-2pt]
&\operatorname{clip}\!\left(\rho_{i,t}(\theta),1-\epsilon_l,1+\epsilon_h\right)A_i\Big\}
-\beta D_{\mathrm{KL}}\!\left(\pi_\theta\Vert\pi_{\mathrm{ref}}\right)\Bigg].
\label{eq:grpo_objective}
\end{align}
Here $\pi_{\mathrm{ref}}$ is the reference policy, $\beta$ controls KL regularization, and $\epsilon_l$ and $\epsilon_h$ are the lower and upper clipping radii. Setting $\epsilon_l=\epsilon_h$ recovers symmetric clipping; using $\epsilon_h>\epsilon_l$ permits broader exploration for positively advantaged responses.

\subsection{Bidirectional Denoising Refinement}
\label{sec:prelim_denoising}

Bidirectional reconstruction over corrupted discrete tokens provides a natural bridge between bidirectional context modeling and iterative correction~\citep{ou2024absorbing,sahoo2024masked,sun2026echochange}. Lavida-O applies masked diffusion with bidirectional context to discretized grounding coordinates~\citep{li2026lavidao}. BDR instead keeps an existing response and its coordinate tokens visible as references for same-position editing. Let $z=(z_1,\ldots,z_L)$ denote a clean response and $\widetilde{z}$ its discretely corrupted version, with $\mathcal{M}$ denoting the perturbed positions. A bidirectional denoiser reconstructs the original token at each selected position with the same-position objective:
\begin{equation}
\mathcal{L}_{\mathrm{denoise}}
=
-\mathbb{E}_{z,\widetilde{z}}\!\left[
\frac{1}{|\mathcal{M}|}
\sum_{j\in\mathcal{M}}
\log p_\phi(z_j\mid\widetilde{z},c)_j
\right].
\label{eq:denoising_objective}
\end{equation}

Here $p_\phi(\cdot\mid\widetilde z,c)_j$ is the token distribution predicted at response position $j$. Starting from an imperfect response $z^{(0)}$, the same position-aligned predictor can update selected positions for $P$ passes:
\begin{equation}
z_j^{(p)}
=
\begin{cases}
\displaystyle
\operatorname*{arg\,max}_{v}\;
p_\phi\!\left(v\mid z^{(p-1)},c\right)_j,
& j\in\mathcal{M}^{(p)},\\
z_j^{(p-1)},
& \text{otherwise},
\end{cases}
\qquad
p=1,\ldots,P.
\label{eq:iterative_correction}
\end{equation}

Here $\mathcal{M}^{(p)}$ denotes the set of response positions updated during
refinement pass $p$.

This formulation combines bidirectional position-aligned reconstruction with denoising-style iterative correction. It operates directly on discrete response tokens rather than introducing a continuous noise-time process or a full reverse-time Markov chain. In CoEvolve, $z^{(0)}$ is the RER localization trajectory, and BDR extends this view to visible and geometrically corrupted coordinate tokens; token masking is used only during warm-up. Bidirectional context and coordinate-aware objectives define the refinement process described in Section~\ref{sec:method}.

\section{Methodology}
\label{sec:method}

\subsection{Overview}
\label{sec:overview}

Given an image $I$ and a referring expression $q$, visual grounding predicts the referred object's box in the non-empty target setting. We denote the training annotation by $\mathbf b^{*}=[x_1^{*},y_1^{*},x_2^{*},y_2^{*}]$ and normalize coordinates to integers in $[0,1000]$. A valid box satisfies $x_1<x_2$ and $y_1<y_2$ within this range. Fig.~\ref{fig:overview} illustrates the workflow. \textbf{Region-Evolution Reinforcement (RER)} constructs a tagged response $y$ comprising reasoning text, a progressive region sequence, and a terminal answer. \textbf{Bidirectional Denoising Refiner (BDR)} retains the reasoning text as semantic context and refines the state's coordinate fields. This state provides semantic context, spatial history, an edit reference, and natural upstream errors. The two components are trained sequentially and communicate only through the generated state.

\begin{figure*}[ht]
    \centering
    \includegraphics[width=\linewidth]{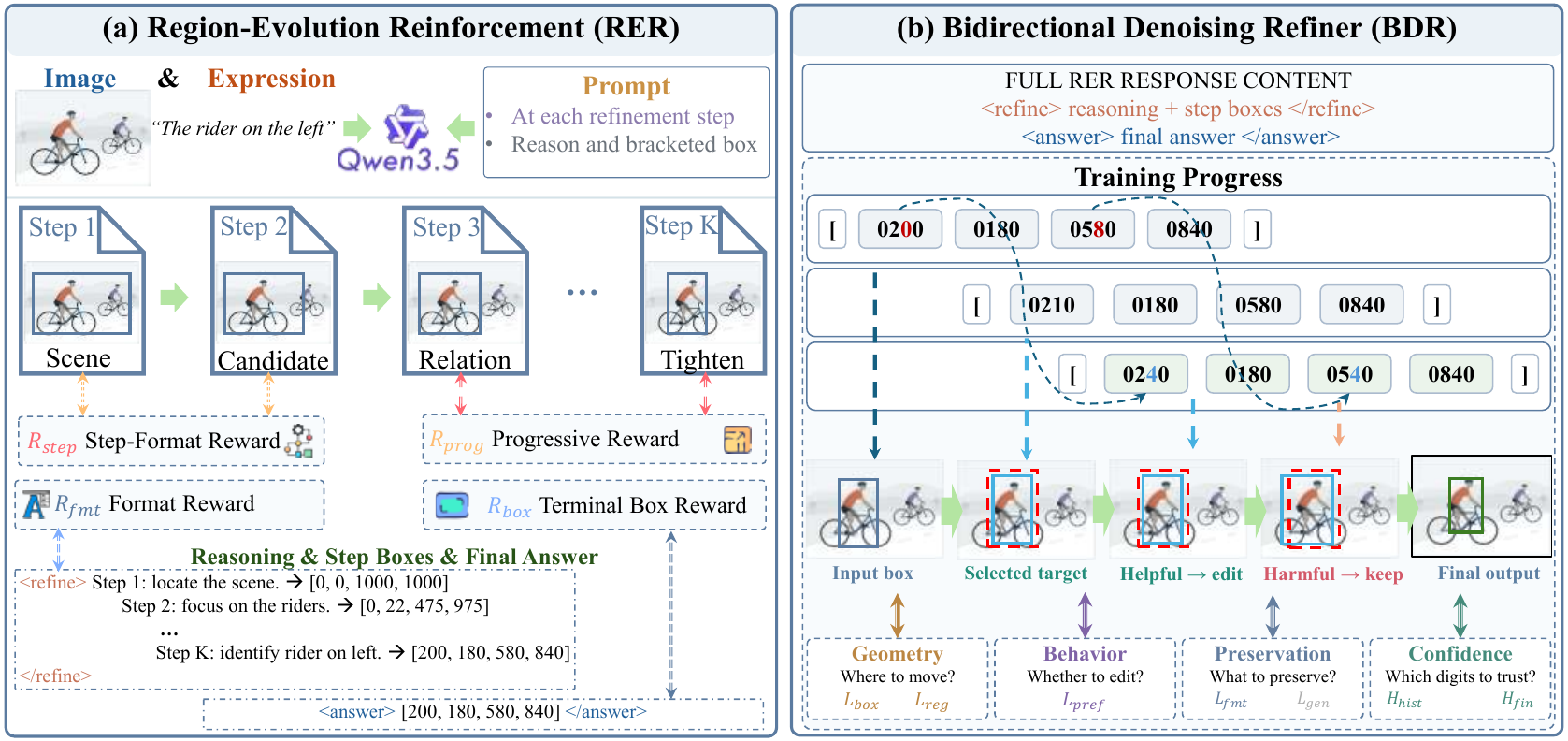}
    \caption{Overview of CoEvolve. RER converts otherwise free-form grounding analysis into a progressive semantic--spatial state in which each reasoning step commits to an explicit region. The resulting response becomes an editable localization state for BDR. BDR performs bidirectional same-position refinement of its coordinate fields while retaining the surrounding response as context. Geometry-level objectives provide target geometry, and behavior-level objectives provide edit-preference signals.}
    \label{fig:overview}
\end{figure*}

\subsection{Region-Evolution Reinforcement}
\label{sec:rer}

An outcome-only box reward does not directly assess whether intermediate region states make target-directed progress before the terminal prediction. RER therefore adapts an autoregressive policy $\pi_{\theta}(y\mid I,q)$ to construct a measurable localization state. A valid response contains one \structtag{refine} block with $3$--$5$ steps and one \structtag{answer} block. We denote its parsed sequence by $\tau(y)$, where each pair $(r_k,\mathbf b_k)$ associates a reasoning step with an explicit region and $\mathbf b_{\mathrm{ans}}$ is the terminal answer. Unlike regions that identify visual evidence in grounded reasoning~\citep{fan2025grit,wu2026gcot}, each RER box is a candidate location for the referred target, and the ordered sequence becomes BDR's editable state. We use \emph{semantic--spatial state} to denote this division of roles: free-form text supplies semantic context, while each step exposes a spatial commitment evaluated by trajectory-level rewards. These rewards assess the spatial commitments rather than the semantic correctness of the free-form text itself.
For each sampled response, RER assigns the sequence-level reward
\begin{equation}
\begin{aligned}
R_{\mathrm{RER}}(y,\mathbf b^{*})={}&
0.5R_{\mathrm{fmt}}(y)+0.5R_{\mathrm{step}}(y)+R_{\mathrm{prog}}(y,\mathbf b^{*})
+R_{\mathrm{box}}(y,\mathbf b^{*}).
\end{aligned}
\label{eq:rer_reward}
\end{equation}
$R_{\mathrm{fmt}}$ checks the outer tagged structure and provides graded feedback for partially valid responses. $R_{\mathrm{step}}$ checks whether each refinement step ends in a parseable box; it does not score localization accuracy. When $3$--$5$ valid non-empty regions are available, $R_{\mathrm{prog}}$ evaluates target-directed spatial progress using overlap, center, coverage, and final-consistency signals. $R_{\mathrm{box}}$ parses and evaluates the terminal answer by its overlap and shape agreement with $\mathbf b^{*}$, and assigns the prescribed negative reward when the answer is missing, repeated, malformed, or invalid. The appendix specifies the piecewise definitions and coefficients of these terms.

We sample eight responses per input and use Eq.~(\ref{eq:rer_reward}) in the GRPO objective introduced in Section~\ref{sec:preliminaries}. The annotated terminal box evaluates the generated regions, but RER requires no manually labeled reasoning text or intermediate box trajectory. It therefore derives trajectory-level spatial rewards from the terminal box annotation and organizes the search process into a structured state whose coordinate fields BDR can refine.

\subsection{Bidirectional Denoising Refiner}
\label{sec:bdr}

\paragraph{Editable localization states.}
BDR refines complete RER responses, using their fixed reasoning text as semantic context and editing the coordinate fields of historical Steps, the last Step, and the Answer. Geometry-level and behavior-level objectives supervise target coordinates and edit decisions, while auxiliary reconstruction and entropy terms preserve the response and regularize confidence. For states whose RER Answer has IoU above $0.5$, we sample $\xi\sim\operatorname{Bernoulli}(0.3)$; for all other states, we set $\xi=0$. Given $\mathbf x^{(0)}=\operatorname{Tok}(y)$, we use $\mathbf x^{\mathrm{in}}=\mathbf x^{(0)}$ when $\xi=0$ and $\mathbf x^{\mathrm{in}}=\operatorname{Tok}(\mathcal Q(y,\mathbf b^{*}))$ otherwise, where $\mathcal Q$ replaces selected boxes with valid geometric perturbations.

\paragraph{Bidirectional same-position reconstruction.}
We realize bidirectional state access by removing causal masking from the backbone's full-attention layers, while retaining each editable coordinate at its original response position. The set $\mathcal T_s$ contains the $16$ ordered digit positions of box role $s$, and $\mathcal T_{\mathrm{bbox}}=\bigcup_s\mathcal T_s$. BDR produces position-aligned logits for all editable coordinates from the same observed response. Denoting the ten digit tokens by $\mathcal V_{\mathrm{digit}}$, the coordinate candidates used during training are obtained as
\begin{equation}
\begin{aligned}
\mathbf z&=F_{\phi}(I,q,\mathbf x^{\mathrm{in}}),
\quad
\tilde{x}_t^{\mathrm{out}}
=\arg\max_{v\in\mathcal V_{\mathrm{digit}}}z_{t,v},
\quad t\in\mathcal T_{\mathrm{bbox}},\\[-2pt]
\tilde{\mathbf b}_s^{\mathrm{out}}
&=\operatorname{DecBox}\!\left(
(\tilde{x}_t^{\mathrm{out}})_{t\in\mathcal T_s}
\right).
\end{aligned}
\label{eq:bdr_reconstruction}
\end{equation}
Here $F_{\phi}$ denotes BDR with adapter parameters $\phi$, and $\mathbf z$ contains position-aligned logits. Digit projection yields synchronous box candidates for target selection, regression, and preference supervision. At inference, BDR updates coordinate positions while retaining the reasoning text; denoising denotes reconstruction from masking and geometric corruption, not a diffusion timestep process.

\paragraph{Behavior-Level Supervision: Quality-Gated Target Selection.}
We compute $u_{\mathrm{in}}=\operatorname{IoU}(\mathbf b_{\mathrm{ans}}^{\mathrm{in}},\mathbf b^{*})$ and $u_{\mathrm{out}}=\operatorname{IoU}(\tilde{\mathbf b}_{\mathrm{ans}}^{\mathrm{out}},\mathbf b^{*})$, assigning zero to an invalid box. With $\eta=0.9$ and $\gamma=0.005$, $\mathcal C_{\mathrm{out}}$ denotes $u_{\mathrm{out}}>\eta$ and $u_{\mathrm{out}}-u_{\mathrm{in}}\geq\gamma$, while $\mathcal C_{\mathrm{in}}$ denotes $\xi=0$ and $u_{\mathrm{in}}\geq\eta$. Absolute box supervision uses
\begin{equation}
\mathbf b^{\mathrm{tar}}=
\begin{cases}
\tilde{\mathbf b}_{\mathrm{ans}}^{\mathrm{out}}, & \mathcal C_{\mathrm{out}},\\
\mathbf b_{\mathrm{ans}}^{\mathrm{in}},          & \mathcal C_{\mathrm{in}},\\
\mathbf b^{*},                                      & \text{otherwise}.
\end{cases}
\label{eq:bdr_target}
\end{equation}
The stop-gradient branches define three target-selection cases. When $\mathcal C_{\mathrm{out}}$ holds, the candidate itself becomes the target, reinforcing a reliable improvement. When $\mathcal C_{\mathrm{in}}$ holds, the input anchor becomes the target, reinforcing preservation. Otherwise, the annotation provides the corrective target. The rule therefore supplies both the geometric label and the corresponding training preference for the current state.

\paragraph{Geometry-Level Supervision.}
Geometry-level supervision provides the target geometry for the edited localization. The last refinement box and the answer, denoted by $\mathcal S_f=\{\mathrm{last},\mathrm{ans}\}$, receive the same selected target from Eq.~(\ref{eq:bdr_target}). Their four zero-padded coordinates contribute $32$ strongly supervised digit positions. The coordinate cross-entropy $\mathcal L_{\mathrm{box}}$ enforces the target digit at each position, with equal digit weights during warm-up. After warm-up, the four digits of coordinate $k$ share $w_k=1+10e_k$, where $e_k=(2000)^{-1}\sum_{s\in\mathcal S_f}|\tilde b_{sk}^{\mathrm{out}}-b_k^{\mathrm{tar}}|$. The weighted token losses are divided by the sum of their weights, so $w_k$ redistributes gradient toward inaccurate coordinates without changing the nominal scale of $\mathcal L_{\mathrm{box}}$.

Token cross-entropy distinguishes exact digit identities but does not directly encode numerical distance. We therefore project the logits of digit $j\in\{0,1,2,3\}$ in coordinate $k\in\{1,2,3,4\}$ of role $s\in\mathcal S_f$ onto the ten digit tokens. For its position $t_{skj}$, the resulting distribution and soft coordinate are
\begin{equation}
\begin{aligned}
p_{skj}(d)&=
\frac{\exp z_{t_{skj},d}}
{\sum_{a=0}^{9}\exp z_{t_{skj},a}},
\bar b_{sk}=\sum_{j=0}^{3}10^{3-j}\sum_{d=0}^{9}d\,p_{skj}(d).
\end{aligned}
\label{eq:soft_coordinate}
\end{equation}
The regression loss $\mathcal L_{\mathrm{reg}}$ averages $|\bar b_{sk}-b_k^{\mathrm{tar}}|/1000$ over the eight supervised coordinates. Together, $\mathcal L_{\mathrm{box}}$ and $\mathcal L_{\mathrm{reg}}$ provide complementary geometric signals: the former preserves the exact discrete coordinate representation, while the latter encodes continuous numerical distance to the selected target.

\paragraph{Coordinate-wise Edit Preference.}
Quality-gated target selection defines the desired state-level behavior, but does not explicitly compare an individual coordinate edit with retaining its input value. We therefore introduce coordinate-wise preference supervision. We index historical roles by $\mathcal S_h=\{1,\ldots,K-1\}$ and define $\mathcal S=\mathcal S_h\cup\mathcal S_f$. For every role with $16$ valid input digits, $\breve{\mathbf b}_s^{\mathrm{in}}$ is decoded from the observed input and $\tilde{\mathbf b}_s^{\mathrm{out}}$ is the digit-projected candidate. Their coordinate-level edit gain is
\begin{equation}
\Delta_{sk}=
\frac{|\breve b^{\mathrm{in}}_{sk}-b_k^{*}|
-|\tilde b^{\mathrm{out}}_{sk}-b_k^{*}|}{1000}.
\label{eq:edit_gain}
\end{equation}
For coordinate $k$ of role $s$, $\mathcal C_{sk}$ contains the changed digit positions, $D_{sk}$ is their mean output-versus-input logit margin, and $y_{sk}$ indicates whether $\Delta_{sk}$ is positive. Thus, $\Delta_{sk}>0$ identifies an edit closer to the annotation than its input counterpart, whereas $\Delta_{sk}<0$ identifies a detrimental edit. Binary cross-entropy with logits is applied when $\mathcal C_{sk}\neq\varnothing$ and $|\Delta_{sk}|>0.005$. Active coordinates within a role are averaged with weights $a_{sk}=\min(1,|\Delta_{sk}|/0.05)$. Historical roles receive weight $0.2$, and the last Step and Answer each receive weight $0.5$. The active role losses are summed per sample and then averaged over the minibatch to obtain $\mathcal L_{\mathrm{pref}}$. Unlike absolute coordinate supervision, $\mathcal L_{\mathrm{pref}}$ learns the relative utility of an edit.

\paragraph{State Preservation and Confidence Regularization.}
The primary objectives operate inside a structured response whose reasoning and syntax must remain stable. General reconstruction $\mathcal L_{\mathrm{gen}}$ applies token cross-entropy outside the strongly supervised final digits; after warm-up, it preserves non-coordinate assistant content. Format loss $\mathcal L_{\mathrm{fmt}}$ adds cross-entropy on structural tags and coordinate punctuation. For a digit distribution $p$, natural entropy is $H(p)=-\sum_{d=0}^{9}p(d)\log p(d)$. The historical term $\mathcal H_{\mathrm{hist}}$ sharpens digit distributions in earlier refinement steps without imposing an absolute coordinate target. The final term $\mathcal H_{\mathrm{fin}}$ acts on the last Step and Answer only when the digit-vocabulary argmax equals the selected target digit, sharpening correct predictions without increasing confidence in an incorrect digit. These terms preserve response structure and non-coordinate content while regularizing confidence; they do not determine whether an edit is beneficial.

\paragraph{Combined objective and warm-up.}
With the coefficients used in training, the post-warm-up objective is
\begin{equation}
\begin{aligned}
\mathcal L_{\mathrm{BDR}}={}&
\underbrace{\mathcal L_{\mathrm{box}}+30\mathcal L_{\mathrm{reg}}}_{\mathcal L_{\mathrm{geom}}}
+\mathcal L_{\mathrm{pref}}\\
&+\underbrace{0.5\mathcal L_{\mathrm{fmt}}+0.2\mathcal L_{\mathrm{gen}}}_{\mathcal L_{\mathrm{state}}}
+\underbrace{0.2\mathcal H_{\mathrm{hist}}+0.2\mathcal H_{\mathrm{fin}}}_{\mathcal L_{\mathrm{conf}}}.
\end{aligned}
\label{eq:bdr_objective}
\end{equation}
Behavior-level supervision enters through both the target policy in Eq.~(\ref{eq:bdr_target}), which determines the target used by $\mathcal L_{\mathrm{geom}}$, and the additive preference term $\mathcal L_{\mathrm{pref}}$. During the first $3\%$ of optimization, we mask $10\%$ of eligible assistant tokens and use $\mathcal L_{\mathrm{box}}+0.2\mathcal L_{\mathrm{gen}}$. The target rule remains active; $\mathcal L_{\mathrm{gen}}$ also covers historical box digits, assigning the selected target to perturbed historical boxes. Regression, entropy, and preference are activated after warm-up.

\section{Experiments}
\label{sec:experiments}

We evaluate CoEvolve on standard referring-expression grounding, localization under stricter IoU criteria, controlled box correction, component ablation, and remote-sensing grounding. Unless otherwise noted, aggregate evaluation metrics are reported as percentages.

\subsection{Experimental Setup}
\label{sec:exp_setup}

\paragraph{Datasets and Evaluation.}
For natural-image grounding, we train CoEvolve only on the RefCOCOg training set and evaluate all eight RefCOCO-family splits (57,457 examples) without further fine-tuning~\citep{mao2016generation,yu2016modeling}. RefCOCOg measures in-dataset performance, while RefCOCO and RefCOCO+ provide cross-dataset evaluation under RefCOCOg-only training. We report split-wise $\mathrm{Acc}@0.5$ and their arithmetic mean (Overall). Stricter-IoU, controlled-correction, and ablation experiments use five test splits comprising 30,969 examples. Unless stated otherwise, their aggregate metrics are computed jointly over all examples rather than by averaging the five split-level scores. Remote-sensing grounding is evaluated on DIOR-RSVG~\citep{zhan2023rsvg}.

\subsection{Standard RefCOCO-Family Grounding}
\label{sec:main_grounding_v3}

Among the general-purpose MLLMs, specialized grounding systems, and grounding-adapted MLLMs compared in Table~\ref{tab:main_grounding_v3}, CoEvolve obtains the highest tabulated Overall $\mathrm{Acc}@0.5$ of 92.41, outperforming Qwen3.5-9B on all eight splits and improving its aggregate score by 2.93 points. CoEvolve ranks first on four splits and ties for first on RefCOCOg val, demonstrating strong and consistent grounding performance across the RefCOCO family.

\begin{table}[ht]
\centering
\setlength{\belowcaptionskip}{5pt}
\caption{Grounding performance on the eight standard RefCOCO-family splits, measured by $\mathrm{Acc}@0.5$. Overall is the arithmetic mean across the eight split-level results. It is computed from full-precision values when available and otherwise from the reported split-level values. Public methods follow their reported protocols; $^{\ddagger}$ denotes our locally reproduced Qwen3.5-9B baseline. CoEvolve trains only on RefCOCOg train, without target-dataset fine-tuning. Best and second-best results are bolded and underlined.}
\label{tab:main_grounding_v3}
{\scriptsize
\setlength{\tabcolsep}{2.0pt}
\renewcommand{\arraystretch}{1.0}
\begin{tabular*}{\textwidth}{@{\extracolsep{\fill}}lccccccccc@{}}
\toprule
& \multicolumn{3}{c}{RefCOCO} & \multicolumn{3}{c}{RefCOCO+} & \multicolumn{2}{c}{RefCOCOg} & \\
\cmidrule(lr){2-4}\cmidrule(lr){5-7}\cmidrule(lr){8-9}
Method & val & testA & testB & val & testA & testB & val & test & Overall $\uparrow$ \\
\midrule
Qwen2-VL-7B~\citep{wang2024qwen2vl} & 91.7 & 93.6 & 87.3 & 85.8 & 90.5 & 79.5 & 87.3 & 87.8 & 87.94 \\
Qwen2-VL-72B~\citep{wang2024qwen2vl} & 93.2 & 95.3 & 90.7 & 90.1 & \underline{93.8} & 85.6 & 89.9 & 90.4 & 91.13 \\
Qwen2.5-VL-7B~\citep{bai2025qwen25vl} & 90.0 & 92.5 & 85.4 & 84.2 & 89.1 & 76.9 & 87.2 & 87.2 & 86.56 \\
Qwen2.5-VL-72B~\citep{bai2025qwen25vl} & 92.7 & 94.6 & 89.7 & 88.9 & 92.2 & 83.7 & 89.9 & 90.3 & 90.25 \\
Qwen3-VL-8B-Instruct~\citep{bai2025qwen3vl} & 91.6 & 93.3 & 87.8 & 85.8 & 90.3 & 79.9 & 88.7 & 88.7 & 88.26 \\
Qwen3-VL-235B-A22B~\citep{bai2025qwen3vl} & 92.9 & 94.6 & 90.2 & 88.6 & 92.4 & 84.2 & 90.3 & 90.8 & 90.50 \\
InternVL3-8B~\citep{zhu2025internvl3} & 92.5 & 94.6 & 88.0 & 88.2 & 92.5 & 81.8 & 89.6 & 90.0 & 89.65 \\
InternVL3-78B~\citep{zhu2025internvl3} & 93.4 & \underline{95.4} & 90.3 & 90.1 & \underline{93.8} & 85.3 & 91.5 & 91.5 & 91.41 \\
InternVL3.5-8B~\citep{wang2025internvl35} & 92.4 & 94.7 & 88.7 & 87.9 & 92.4 & 82.4 & 89.6 & 89.4 & 89.69 \\
InternVL3.5-241B-A28B~\citep{wang2025internvl35} & \underline{94.1} & \textbf{96.3} & \underline{91.5} & \textbf{91.6} & \textbf{94.6} & \underline{86.9} & \underline{92.0} & 92.1 & \underline{92.39} \\
DeepSeek-VL2-Small~\citep{wu2024deepseekvl2} & 93.9 & 95.3 & 91.3 & 89.4 & 92.9 & 84.8 & \textbf{92.6} & \underline{92.6} & 91.60 \\
Grounding-DINO-L~\citep{liu2024groundingdino} & 90.6 & 93.2 & 88.2 & 82.8 & 89.0 & 75.9 & 86.1 & 87.0 & 86.60 \\
UNINEXT-H~\citep{yan2023uninext} & 92.6 & 94.3 & \underline{91.5} & 85.2 & 89.6 & 79.8 & 88.7 & 89.4 & 88.89 \\
ONE-PEACE~\citep{wang2024onepeace} & 92.6 & 94.2 & 89.3 & 88.8 & 92.2 & 83.2 & 89.2 & 89.3 & 89.85 \\
Ferret-v2-13B~\citep{zhang2024ferretv2} & 92.6 & 95.0 & 88.9 & 87.4 & 92.1 & 81.4 & 89.4 & 90.0 & 89.60 \\
CogVLM-Grounding-17B~\citep{wang2024cogvlm} & 92.8 & 94.8 & 89.0 & 88.7 & 92.9 & 83.4 & 89.8 & 90.8 & 90.28 \\
EGM-InternVL3-8B~\citep{zhan2026egm} & 93.6 & 95.2 & 90.1 & 89.3 & 93.6 & 83.1 & 89.7 & 90.7 & 90.66 \\
EGM-Qwen3-VL-8B~\citep{zhan2026egm} & 93.9 & 95.0 & 91.2 & 90.1 & 93.3 & 85.9 & 90.4 & 91.2 & 91.38 \\
Qwen3.5-9B$^{\ddagger}$~\citep{qwen2026qwen35} & 91.5 & 93.8 & 89.5 & 87.7 & 91.3 & 82.9 & 89.7 & 89.6 & 89.48 \\
\midrule
\textbf{CoEvolve (ours)} & \textbf{94.4} & 95.3 & \textbf{92.4} & \underline{90.7} & 93.5 & \textbf{87.8} & \textbf{92.6} & \textbf{92.8} & \textbf{92.41} \\
\bottomrule
\end{tabular*}}
\end{table}

\subsection{Localization under Stricter IoU Criteria}
\label{sec:stringent_iou_v3}

Table~\ref{tab:stringent_iou_v4} evaluates localization under stringent IoU criteria across five test splits. CoEvolve achieves 85.72\% mIoU, 87.94\% $\mathrm{Acc}@0.7$, and 68.64\% $\mathrm{Acc}@0.9$, exceeding CogVLM-Grounding-17B by 2.12, 2.63, and 4.19 points, respectively. CoEvolve ranks first in every reported setting, with $\mathrm{Acc}@0.9$ gains ranging from 0.41 points on RefCOCOg test to 7.02 points on RefCOCO+ testB. The larger gains at the strictest threshold are consistent with improved high-IoU localization and boundary alignment across the evaluated test splits.

\begin{table*}[ht]
\centering
\setlength{\belowcaptionskip}{5pt}
\caption{Localization under stricter IoU thresholds on five test splits. Split entries report $\mathrm{Acc}@0.7/\mathrm{Acc}@0.9$; aggregate metrics are computed jointly across the splits. $^{\dagger}$ denotes Hi-R1 aggregates reconstructed from its reported size groups; $^{\ddagger}$ marks locally reproduced baselines (Appendix~\ref{app:prompt_protocol}). Bold and underline denote the best and second-best results.}
\label{tab:stringent_iou_v4}

{\scriptsize
\setlength{\tabcolsep}{1.6pt}
\renewcommand{\arraystretch}{1.0}

\begin{tabular*}{\textwidth}{@{\extracolsep{\fill}}lcccccccc@{}}
\toprule
& \multicolumn{2}{c}{RefCOCO}
& \multicolumn{2}{c}{RefCOCO+}
& RefCOCOg
& \multicolumn{3}{c}{Aggregate Results} \\
\cmidrule(lr){2-3}\cmidrule(lr){4-5}\cmidrule(lr){6-6}
\cmidrule(lr){7-9}
Method
& testA
& testB
& testA
& testB
& test
& mIoU $\uparrow$
& Acc@0.7 $\uparrow$
& Acc@0.9 $\uparrow$ \\
\midrule

Hi-R1$^{\dagger}$~\citep{zhu2026hitoken}
& -
& -
& -
& -
& -
& 79.97
& 80.07
& 54.84 \\

InternVL3.5-8B$^{\ddagger}$~\citep{wang2025internvl35}
& 85.9/63.0
& 77.5/55.0
& 83.9/61.6
& 70.6/50.0
& 79.3/58.6
& 79.33
& 79.68
& 58.02 \\

Qwen3-VL-4B~\citep{bai2025qwen3vl,zhu2026ioupd}
& 88.6/--
& 81.3/--
& 86.0/--
& 73.3/--
& 82.2/--
& 81.74
& 82.51
& 61.24 \\

Qwen3-VL-8B$^{\ddagger}$~\citep{bai2025qwen3vl}
& 88.8/66.3
& 81.1/58.4
& 86.4/64.9
& 74.8/54.0
& 83.0/61.7
& 82.39
& 83.09
& 61.35 \\

Qwen3.5-9B$^{\ddagger}$~\citep{qwen2026qwen35}
& 88.7/65.9
& 81.7/58.6
& 86.4/64.5
& 75.6/54.3
& 83.1/59.6
& 82.16
& 83.31
& 60.64 \\

CogVLM-Grounding-17B$^{\ddagger}$~\citep{wang2024cogvlm}
& \underline{90.6/69.2}
& \underline{83.0/61.3}
& \underline{88.6/67.9}
& \underline{77.3/57.7}
& \underline{85.5/64.7}
& \underline{83.60}
& \underline{85.31}
& \underline{64.45} \\

\midrule

\textbf{CoEvolve (ours)}
& \textbf{92.7/74.2}
& \textbf{87.1/67.8}
& \textbf{90.7/73.1}
& \textbf{82.3/64.7}
& \textbf{86.9/65.2}
& \textbf{85.72}
& \textbf{87.94}
& \textbf{68.64} \\

\bottomrule
\end{tabular*}
}
\end{table*}

\subsection{Controlled Box Correction}
\label{sec:controlled_correction_v3}

Grounding self-correction has been studied through iterative prompting and visual verification~\citep{liao2025correct}. We evaluate a trained coordinate editor on 30,969 examples using the same deterministic corrupted anchors for all methods. The anchors span multiple IoU levels and geometric perturbations and have 58.80\% mIoU; Appendix~\ref{app:strict_and_controlled} gives the full construction protocol. This setting bypasses RER and directly evaluates end-to-end box correction under a shared corrupted-anchor protocol. BDR performs one same-position refinement pass, while the compared baselines generate corrected boxes from the same image, referring expression, and input anchor. Exact state templates, prompts, and parsing rules appear in Appendix~\ref{app:prompt_protocol}.

We report output localization metrics and $\mathrm{Repair}@t$, the fraction of initially sub-threshold anchors corrected to at least $t$:
\begin{equation}
\mathrm{Repair}@t =
\frac{\left|\{i:u_i<t,\ \hat{u}_i\geq t\}\right|}
     {\left|\{i:u_i<t\}\right|},
\label{eq:repair_metric_v3}
\end{equation}
where $u_i$ and $\hat{u}_i$ denote input and output IoUs.
We additionally report the mean IoU change,
$N^{-1}\sum_i(\hat{u}_i-u_i)$, and the improvement and degradation rates,
defined as the fractions of anchors whose output IoU is strictly higher and
lower than the corresponding input IoU, respectively.

Fig.~\ref{fig:correction} shows that BDR raises mIoU to 86.19\% and achieves 87.99\% $\mathrm{Repair}@0.7$. Among the compared correction baselines, Qwen3.5-9B is strongest in output mIoU. CoEvolve BDR improves output mIoU, $\mathrm{Acc}@0.9$, and $\mathrm{Repair}@0.7$ over this baseline by 2.34, 5.46, and 8.11 points, respectively. The degradation rate is 11.42\% for Qwen3.5-9B and 15.50\% for CoEvolve BDR.

\begin{figure*}[ht]
\centering
\includegraphics[width=\linewidth]{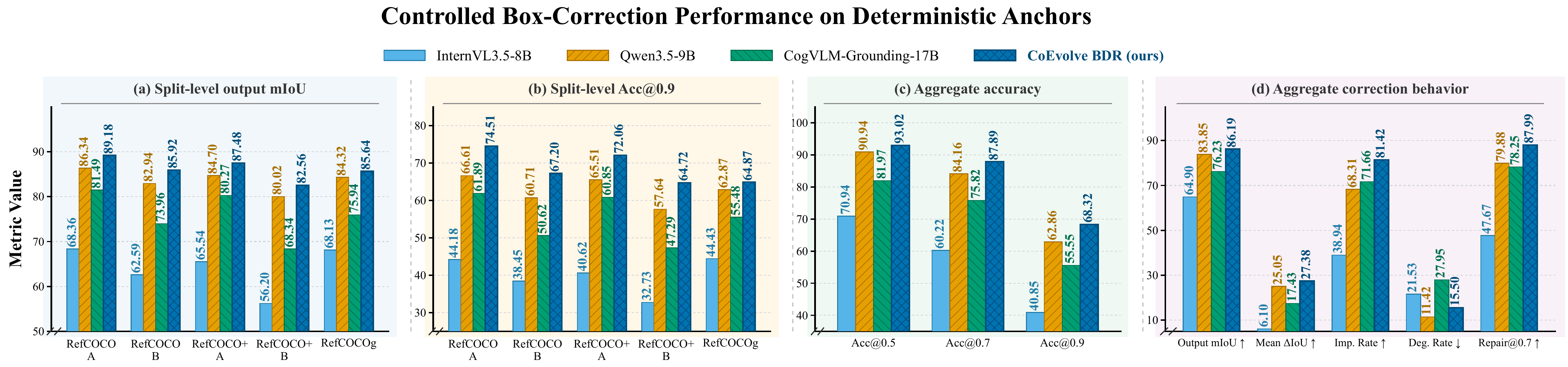}
\caption{Controlled box correction on five test splits with 30,969 deterministic anchors. Aggregate metrics are computed jointly over all anchors. CoEvolve BDR achieves the highest output mIoU, aggregate localization accuracies, mean IoU gain, improvement rate, and $\mathrm{Repair}@0.7$, while Qwen3.5-9B yields the lowest degradation rate. Higher values are better except for degradation rate.}
\label{fig:correction}
\end{figure*}

\subsection{State Construction and Source-Matched Refinement}
\label{sec:ablation_v3}

\paragraph{Meaningful-state ablation.}
Zero-Anchor BDR removes both the upstream localization hypothesis and the anchor-aware preservation and preference objectives enabled by that hypothesis, reducing training to direct same-position coordinate prediction. CoEvolve instead uses the RER state as both a localization starting point and an edit reference. Fig.~\ref{fig:zero_ablation} shows a 2.22-point mIoU gain, while the improvement grows from 2.19 points at $\mathrm{Acc}@0.5$ to 3.68 at $\mathrm{Acc}@0.7$ and 6.48 at $\mathrm{Acc}@0.9$. This widening gap supports the joint value of a meaningful prior localization state and the anchor-aware objectives it enables. Appendix Table~\ref{tab:app_ablation_split} reports split-level results.

\begin{figure*}[ht]
\centering
\includegraphics[width=\linewidth]{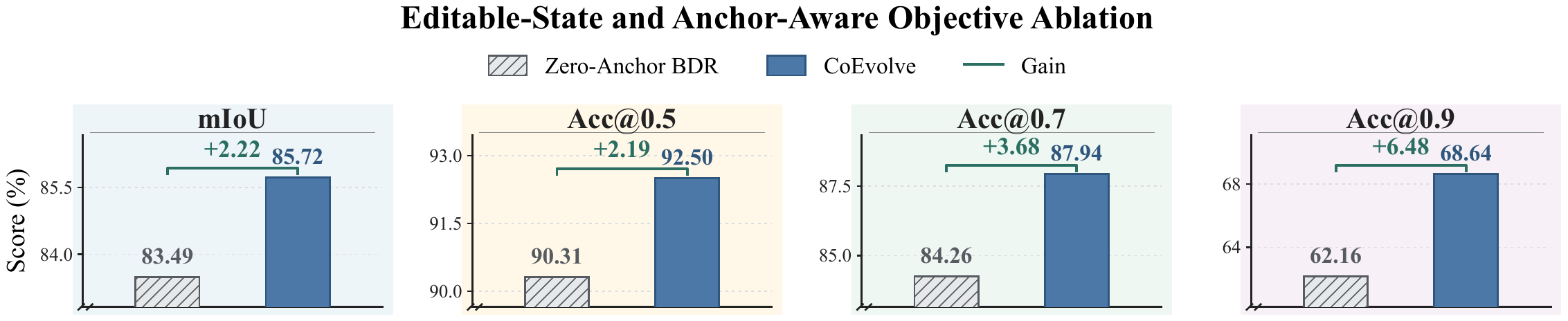}
\caption{Effect of a meaningful editable state and its associated anchor-aware objectives on five test splits. Aggregate metrics are computed jointly across the splits. Zero-Anchor BDR (ours) uses zero anchors; CoEvolve (ours) uses RER states and anchor-aware objectives. Gains are computed from unrounded aggregate values before display rounding.}
\label{fig:zero_ablation}
\end{figure*}

\paragraph{Structured state construction and source-matched refinement.}
Under the same structured interface, RER raises mIoU from 63.48\% to 80.70\% and $\mathrm{Acc}@0.9$ from 24.28 to 59.81 relative to unadapted Qwen3.5-9B. Because both policies use the same response format, these gains show the benefit of RER training for structured region evolution. For each state source, we train a separate BDR from scratch on its outputs. Table~\ref{tab:state_source_v3} shows that native outputs start at 82.16\% mIoU, above RER states at 80.70\%. After source-matched refinement, the RER pipeline gains 5.02 points and reaches 85.72\%, while the native-output pipeline gains 2.41 points and reaches 84.57\%. At $\mathrm{Acc}@0.9$, their respective gains are 8.83 and 5.68 points, yielding 68.64\% and 66.32\%. This reversal shows that initial localization accuracy alone does not determine how effectively a state can be refined.

\begin{table*}[!ht]
\centering
\setlength{\belowcaptionskip}{5pt}
\caption{Source-matched state construction and refinement on five test splits. Each source is paired with a separately trained BDR; metrics are computed jointly across the splits.}
\label{tab:state_source_v3}
\scriptsize
\setlength{\tabcolsep}{2.0pt}
\begin{tabular*}{\textwidth}{@{\extracolsep{\fill}}lccccccc@{}}
\toprule
State / editor & RER Trained? & Init. mIoU & Final mIoU & $\Delta$ mIoU & Init. Acc@0.9 & Final Acc@0.9 & $\Delta$ Acc@0.9 \\
\midrule
Structured Qwen3.5 & No & 63.48 & -- & -- & 24.28 & -- & -- \\
Native output $\rightarrow$ BDR (ours) & No & \textbf{82.16} & \underline{84.57} & \underline{+2.41} & \textbf{60.64} & \underline{66.32} & \underline{+5.68} \\
RER $\rightarrow$ BDR (ours) & Yes & \underline{80.70} & \textbf{85.72} & \textbf{+5.02} & \underline{59.81} & \textbf{68.64} & \textbf{+8.83} \\
\bottomrule
\end{tabular*}
\end{table*}

\subsection{Cross-Domain Adaptation and Downstream Segmentation}
\label{sec:remote_v3}

\paragraph{Remote-sensing grounding.}
After domain-specific retraining on DIOR-RSVG, CoEvolve achieves 86.20\% $\mathrm{Pr}@0.5$ and 76.23\% mIoU. Relative to the reproduced native Qwen3.5-9B baseline, which is evaluated without DIOR-RSVG adaptation, the complete domain-adapted system improves these metrics by 30.35 and 24.40 points, respectively. Table~\ref{tab:dior_v3} compares CoEvolve with task-specific methods and general-purpose MLLM baselines.

\begin{table}[!ht]
\centering
\setlength{\belowcaptionskip}{5pt}
\caption{Remote-sensing grounding on DIOR-RSVG. Public methods follow their
reported protocols; RER and BDR are retrained on the DIOR-RSVG training
split. $^{\dagger}$ denotes our reproduced native baseline evaluated without
DIOR-RSVG adaptation. Best and second-best results are bolded and underlined,
respectively; dashes denote unreported results.}
\label{tab:dior_v3}
\scriptsize
\setlength{\tabcolsep}{2.0pt}
\begin{tabular*}{\linewidth}{@{\extracolsep{\fill}}lcccc@{}}
\toprule
Method & $\mathrm{Pr}@0.5$ $\uparrow$ & $\mathrm{Pr}@0.7$ $\uparrow$ & $\mathrm{Pr}@0.9$ $\uparrow$ & mIoU $\uparrow$ \\
\midrule
LQVG~\citep{lan2024lqvg} & \underline{83.41} & \underline{75.91} & \textbf{43.53} & \underline{74.02} \\
LPVA~\citep{li2024lpva} & 82.27 & 72.25 & 39.55 & 72.35 \\
Qwen3.5-4B~\citep{qwen2026qwen35,wang2026geosearcher} & 53.10 & 39.40 & -- & 49.00 \\
InternVL3.5-8B~\citep{wang2025internvl35,wang2026geosearcher} & 64.90 & 50.50 & -- & 56.90 \\
RSGround-R1~\citep{huang2026rsgroundr1} & 71.80 & 58.70 & -- & 63.40 \\
GeoSearcher~\citep{wang2026geosearcher} & 83.20 & 72.40 & -- & 73.50 \\
Qwen3.5-9B$^{\dagger}$~\citep{qwen2026qwen35} & 55.85 & 43.15 & 18.81 & 51.83 \\
\midrule
CoEvolve (ours) & \textbf{86.20} & \textbf{76.16} & \underline{43.45} & \textbf{76.23} \\
\bottomrule
\end{tabular*}
\end{table}

\paragraph{Downstream segmentation.}
CoEvolve predictions also serve as box prompts to frozen SAM3~\citep{carion2026sam3}, yielding 78.19 Macro-8 cIoU across the eight RefCOCO-family splits without mask fine-tuning. Appendix~\ref{app:segmentation} provides the full split-wise comparison.

\section{Conclusion}

CoEvolve reformulates visual grounding as a construct-to-edit process: RER externalizes measurable region commitments, and BDR refines their coordinates through bidirectional same-position reconstruction with geometry-level and behavior-level supervision. Experiments across natural images, controlled perturbations, and remote-sensing scenes support this decomposition. The state-source comparison further shows that refinement depends not only on initial localization quality, but also on how the editable state is constructed and how its editor is trained.

\bibliography{arxiv_2027}
\bibliographystyle{arxiv_2027}

\appendix
\clearpage
\section{Appendix Contents}
\begingroup
\normalsize
\setlength{\baselineskip}{13.5pt}
\setlength{\parskip}{0pt}
\newcommand{\appcontentsline}[2]{%
  \par\addvspace{0.55\baselineskip}%
  \noindent\makebox[2.4em][l]{\textbf{\ref*{#1}}}%
  \hyperref[#1]{\textbf{#2}}\dotfill\hyperref[#1]{\pageref*{#1}}\par}
\newcommand{\appcontentssubline}[2]{%
  \noindent\hspace*{1.5em}\makebox[3.6em][l]{\ref*{#1}}%
  \hyperref[#1]{#2}\dotfill\hyperref[#1]{\pageref*{#1}}\par}
\appcontentsline{app:method_details}{Supplementary Method Details}
\appcontentssubline{app:notation}{RER--BDR Interface}
\appcontentssubline{app:sequential_training}{Sequential Training and BDR Data}
\appcontentssubline{app:rer_details}{Region-Evolution Reinforcement}
\appcontentssubline{app:bdr_details}{BDR Refinement and Targets}
\appcontentssubline{app:inference}{Inference Procedure}
\appcontentssubline{app:prompt_protocol}{Prompts and Evaluation}
\appcontentssubline{app:hyperparameters}{Implementation Notes}
\appcontentsline{app:formal_properties}{Formal Properties}
\appcontentsline{app:additional_results}{Additional Experimental Results}
\appcontentssubline{app:dataset_accounting}{Dataset Accounting}
\appcontentssubline{app:stage1_results}{RER Localization State Quality}
\appcontentssubline{app:qualitative_results}{Qualitative Grounding Results}
\appcontentssubline{app:strict_and_controlled}{Controlled Correction Details}
\appcontentssubline{app:remote}{Remote-Sensing Grounding}
\appcontentssubline{app:segmentation}{Downstream Segmentation}
\appcontentsline{app:limitations}{Limitations}
\endgroup
\clearpage

\section{Supplementary Method Details}
\label{app:method_details}

This appendix gives the complete mathematical specification of RER and BDR.
We use the same notation as in the main paper and make
explicit which quantities are used only to construct training targets and
which quantities are visible to the model.

\subsection{Notation and the RER--BDR Interface}
\label{app:notation}

For an image--expression pair $(I,q)$, a box is represented in the normalized
integer coordinate system by
$\mathbf b=[x_1,y_1,x_2,y_2]\in[0,1000]^4$, with $x_1<x_2$ and $y_1<y_2$.
The annotated box is $\mathbf b^{*}$.  RER produces a response
\begin{equation}
y=\big\langle \structtag{refine}\;(r_1,\mathbf b_1),\ldots,
(r_K,\mathbf b_K)\;\structtag{/refine}\;
\structtag{answer}\,\mathbf b_{\mathrm{ans}}\,\structtag{/answer}\big\rangle ,
\end{equation}
where $K\in\{3,\ldots,5\}$ is the number of refinement steps.  BDR
receives this complete response, including its text and all coordinate
tokens, as an editable localization state.  RER and BDR are optimized
sequentially and communicate only through this generated state.

\subsection{Sequential Training and BDR Dataset Construction}
\label{app:sequential_training}

CoEvolve is trained sequentially rather than end to end.  We first optimize
RER on the RefCOCOg training set using the rewards defined below.  After RER
training is complete, its adapter is frozen and used to generate one complete
localization response for each RefCOCOg training expression.  Each saved record
pairs the original image and referring expression with the complete RER
assistant response and its annotated box.  The generated RER responses are then
used to construct the BDR training set.

BDR initializes a separate adapter on the frozen Qwen3.5-9B backbone; it is
not initialized from the RER adapter.  At loading time, auxiliary
\structtag{think} content is removed, every coordinate is zero-padded to four
digits, and the remaining \structtag{refine} and \structtag{answer} sequence is
used as the editable input state.  The unmodified records retain the natural
errors produced by RER.  Eligible high-overlap states are additionally
augmented with synchronized geometric perturbations, as detailed in
Section~\ref{app:bdr_details}, so BDR observes both the natural RER error
distribution and deliberately corrupted anchors.  The annotation is retained
only for target construction and loss computation and is never inserted into
the model input.

\subsection{Region-Evolution Reinforcement}
\label{app:rer_details}

\paragraph{Interface adaptation and state construction.}
RER adapts the backbone from terminal box generation to a structured interface
in which each reasoning Step exposes an explicit region.  The terminal
annotation evaluates both the final answer and properties of the generated
trajectory, deriving trajectory-level spatial rewards from terminal annotations without
introducing manually labeled rationales or intermediate boxes.  The resulting
response is therefore optimized as a localization state for subsequent
evaluation and editing, rather than as explanation text alone.

\paragraph{Structural rewards.}
For a non-empty target, let $n$ be the number of Step headers. The format reward is
\begin{equation}
R_{\mathrm{fmt}}=
\begin{cases}
0, & \text{invalid outer tagged structure},\\
0.5, & \text{valid outer structure and }(n<3\text{ or }n>5),\\
1, & \text{valid outer structure and }3\le n\le 5.
\end{cases}
\label{eq:app_format_reward}
\end{equation}
All evaluated benchmarks use non-empty targets; the empty-target branch is retained as a generic interface fallback. For an empty target, a valid tagged response with an empty answer receives
one, without a minimum Step count.  The step-format reward checks every
Step independently and is
\begin{equation}
R_{\mathrm{step}}=
\begin{cases}
R_{\mathrm{fmt}}, & \text{empty target},\\
R_{\mathrm{fmt}}\,\mathbf{1}\!\left[
\text{each Step has a parseable final bracketed box}\right],
& \text{otherwise}.
\end{cases}
\label{eq:app_step_reward}
\end{equation}
When a Step contains several bracketed expressions, the last one is used as
the Step box.  Thus, the reward evaluates the final state written by a Step
rather than an earlier box mentioned in its explanation.

\paragraph{Terminal box reward.}
For a valid single predicted box, let
$u=\operatorname{IoU}(\mathbf b_{\mathrm{ans}},\mathbf b^{*})$ and
$g=\operatorname{GIoU}(\mathbf b_{\mathrm{ans}},\mathbf b^{*})$.  The
generalized overlap is computed from the minimum enclosing rectangle
$C$ and its area $A_C$:
\begin{equation}
\operatorname{GIoU}=u-\frac{A_C-A_{\cup}}{A_C},
\end{equation}
where $A_{\cup}$ is the union area.  The scale and aspect agreement term is
\begin{equation}
S_A=\frac{1}{2}\left(
\frac{\min(A_{\mathrm{ans}},A^{*})}{\max(A_{\mathrm{ans}},A^{*})}
+\frac{\min(\rho_{\mathrm{ans}},\rho^{*})}
{\max(\rho_{\mathrm{ans}},\rho^{*})}\right),
\qquad \rho=\frac{\text{width}}{\text{height}}.
\label{eq:app_shape_score}
\end{equation}
The raw terminal score and its format-modulated form are
\begin{align}
R_{\mathrm{box}}^{\mathrm{raw}}
&=u+0.5u^2+0.2g+
0.2\,\sigma\!\left(10(u-0.5)\right)S_A, \label{eq:app_raw_bbox}\\
R_{\mathrm{box}}
&=R_{\mathrm{fmt}}\max(0,R_{\mathrm{box}}^{\mathrm{raw}})
 +\min(0,R_{\mathrm{box}}^{\mathrm{raw}}). \label{eq:app_bbox}
\end{align}
Here $\sigma(\cdot)$ denotes the logistic sigmoid.
The implementation assigns $-0.5$ to a malformed or missing answer, an
answer with the wrong number of boxes, or an answer containing multiple
boxes.  A correct empty-target answer receives one.  Negative geometry
feedback is never attenuated by an invalid format score.

\paragraph{Progressive reward.}
Let $u_i=\operatorname{IoU}(\mathbf b_i,\mathbf b^{*})$.  To provide a
distance signal before overlap is nonzero, the center quality of Step $i$ is
\begin{equation}
q_i=1-\frac{d_i^{\mathrm{norm}}}{\sqrt{2}},
\end{equation}
where $d_i^{\mathrm{norm}}$ is the Euclidean center distance after mapping
coordinates to $[0,1]^2$. Let $d_i^{\mathrm{center}}$ be the Euclidean
distance between the centers of $\mathbf b_i$ and $\mathbf b^{*}$ in the
original $[0,1000]^2$ coordinate system. The first Step receives a coverage score
\begin{equation}
C_{\mathrm{cov}}=\frac{1}{2}\left(C_{\mathrm{area}}+
\exp\left[-d_1^{\mathrm{center}}/350\right]\right).
\label{eq:app_coverage}
\end{equation}
Let $A_1$ and $A^{*}$ denote the first-Step and target areas, and set
$A_{\mathrm{req}}=\max(2A^{*},10^4)$.  The implemented area term is
\begin{equation}
C_{\mathrm{area}}=
\begin{cases}
1, & A_1\geq 5\times10^5,\\
\sigma\!\left(6(A_1/A_{\mathrm{req}}-1)\right),&\text{otherwise}.
\end{cases}
\label{eq:app_area_coverage}
\end{equation}
For later Steps, RER compares each box with the best preceding state. We use
$\epsilon=10^{-8}$ to stabilize the remaining-improvement denominators:
\begin{align}
\Delta_i^{u}
&=\frac{u_i-\max_{j<i}u_j}{1-u_1+\epsilon},&
\Delta_i^{q}
&=\frac{q_i-\max_{j<i}q_j}{1-q_1+\epsilon},\nonumber\\
g_i&=0.6\Delta_i^{u}+0.4\Delta_i^{q},&
C_{\mathrm{prog}}&=\max\left(0,\sum_{i=2}^{K}g_i\right).
\label{eq:app_progress}
\end{align}
The overall first-to-last gain and the last-Step/answer consistency are
\begin{equation}
C_{\mathrm{overall}}=\max\left(0,
\frac{u_K-u_1}{1-u_1+\epsilon}\right),\qquad
C_{\mathrm{cons}}=\operatorname{IoU}(\mathbf b_K,\mathbf b_{\mathrm{ans}}).
\label{eq:app_overall}
\end{equation}
The progressive reward and the complete RER reward are
\begin{align}
R_{\mathrm{prog}}
&=R_{\mathrm{fmt}}\left(
0.20C_{\mathrm{cov}}+0.50C_{\mathrm{prog}}
 +0.20C_{\mathrm{overall}}+0.10C_{\mathrm{cons}}\right), \label{eq:app_prog_reward}\\
R_{\mathrm{RER}}
&=0.5R_{\mathrm{fmt}}+0.5R_{\mathrm{step}}
 +R_{\mathrm{prog}}+R_{\mathrm{box}}. \label{eq:app_rer_total}
\end{align}
If the trajectory has fewer than three or more than five Steps, or any Step
does not contain a valid positive-area box, $R_{\mathrm{prog}}$ is zero.
In implementation, Eq.~(\ref{eq:app_rer_total}) is realized by supplying
$(R_{\mathrm{fmt}},R_{\mathrm{step}},R_{\mathrm{prog}},R_{\mathrm{box}})$
to GRPO with weights $0.5,0.5,1.0,1.0$, respectively; these weights are not
applied again to an already combined reward.
For each prompt, eight sampled responses form one GRPO group.  The
annotation is used to evaluate geometry, but no human-written reasoning
chain is used as a token target.

Figure~\ref{fig:rer_rewards} places the corresponding training diagnostics
beside the reward specification.  The curves show the total reward together
with the structural, progressive, and terminal geometry components.

\begin{figure}[!htb]
    \centering
    \paperfigure[width=0.238\textwidth]{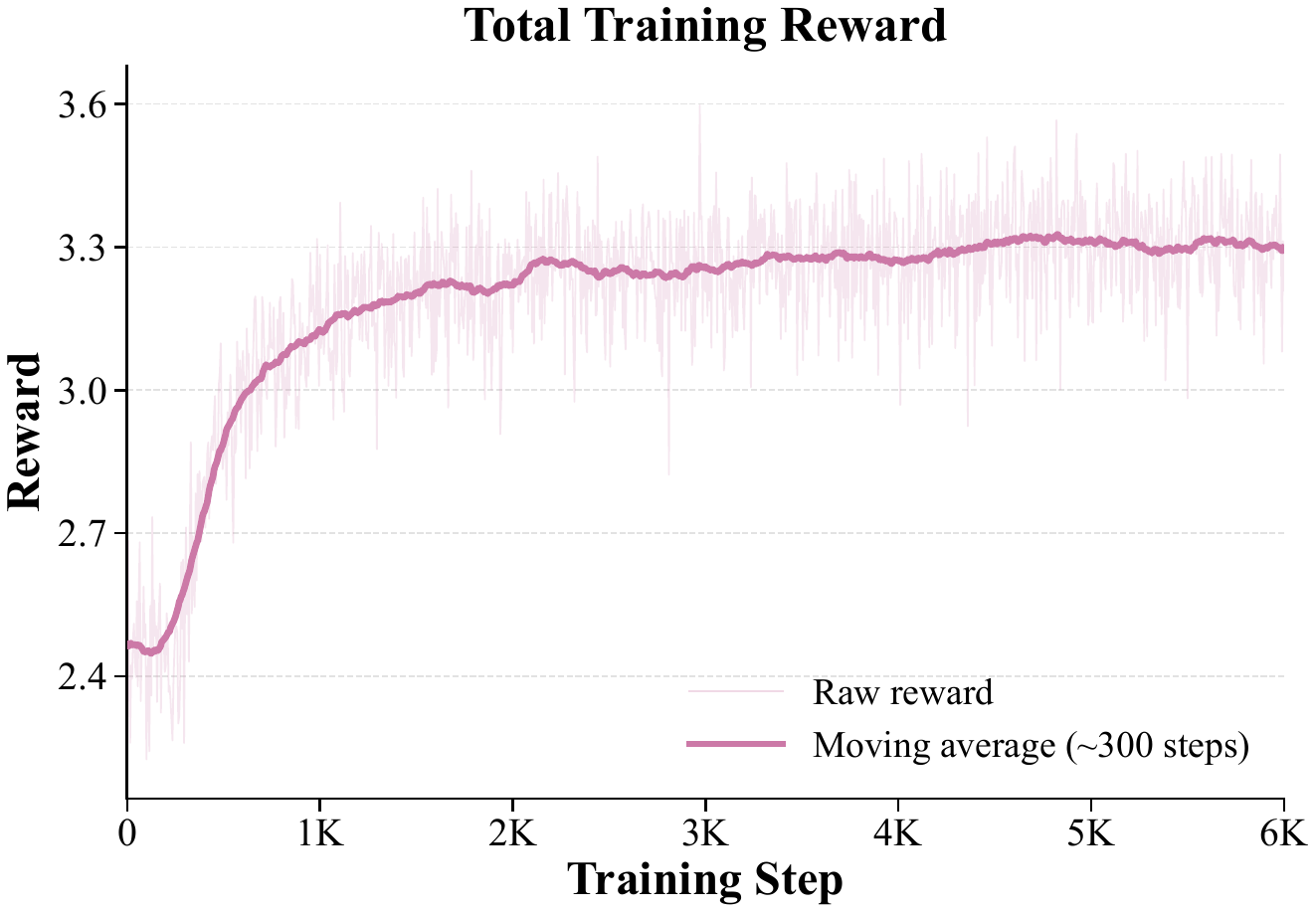}{0.225\textwidth}{25mm}\hfill
    \paperfigure[width=0.238\textwidth]{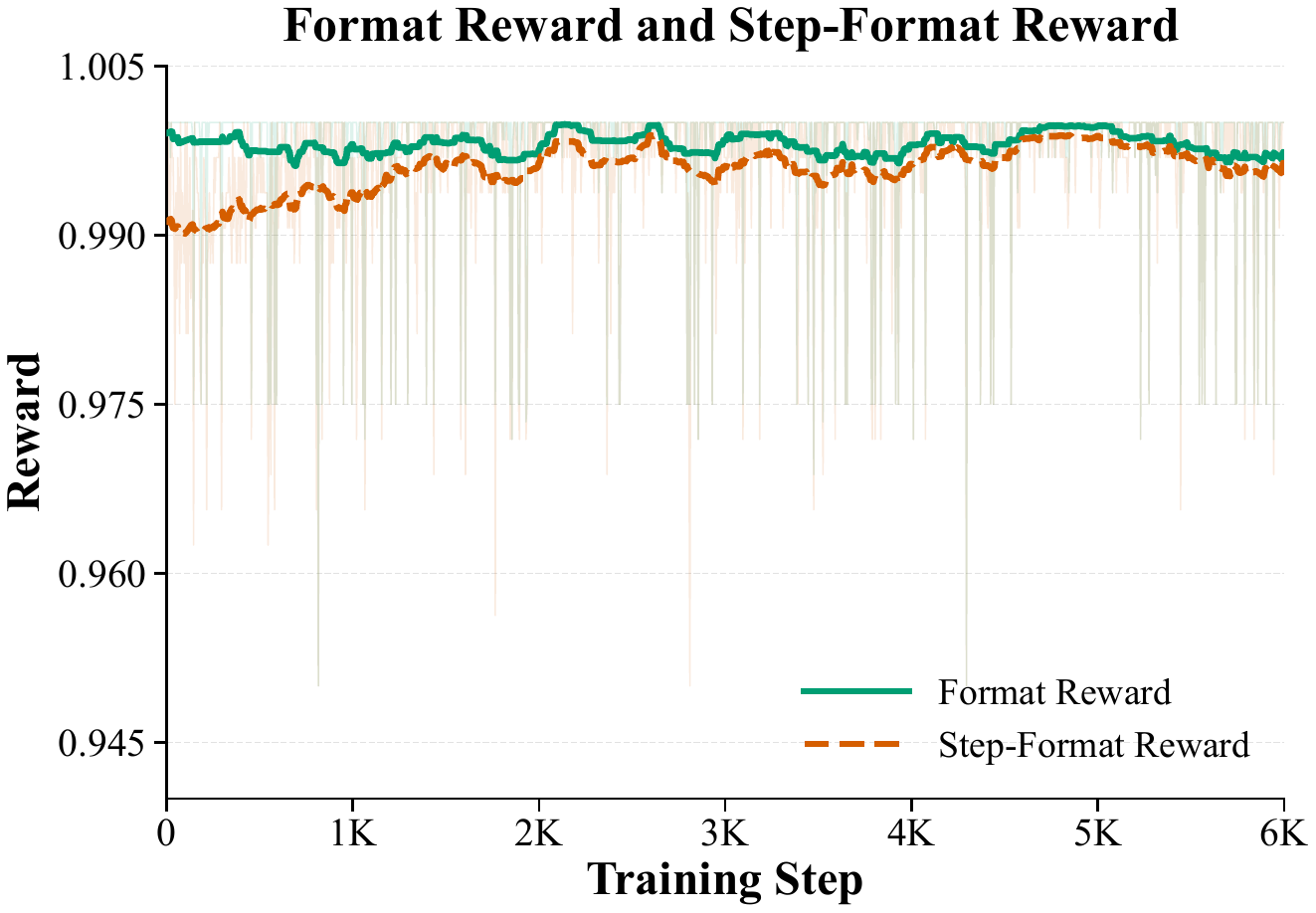}{0.225\textwidth}{25mm}\hfill
    \paperfigure[width=0.238\textwidth]{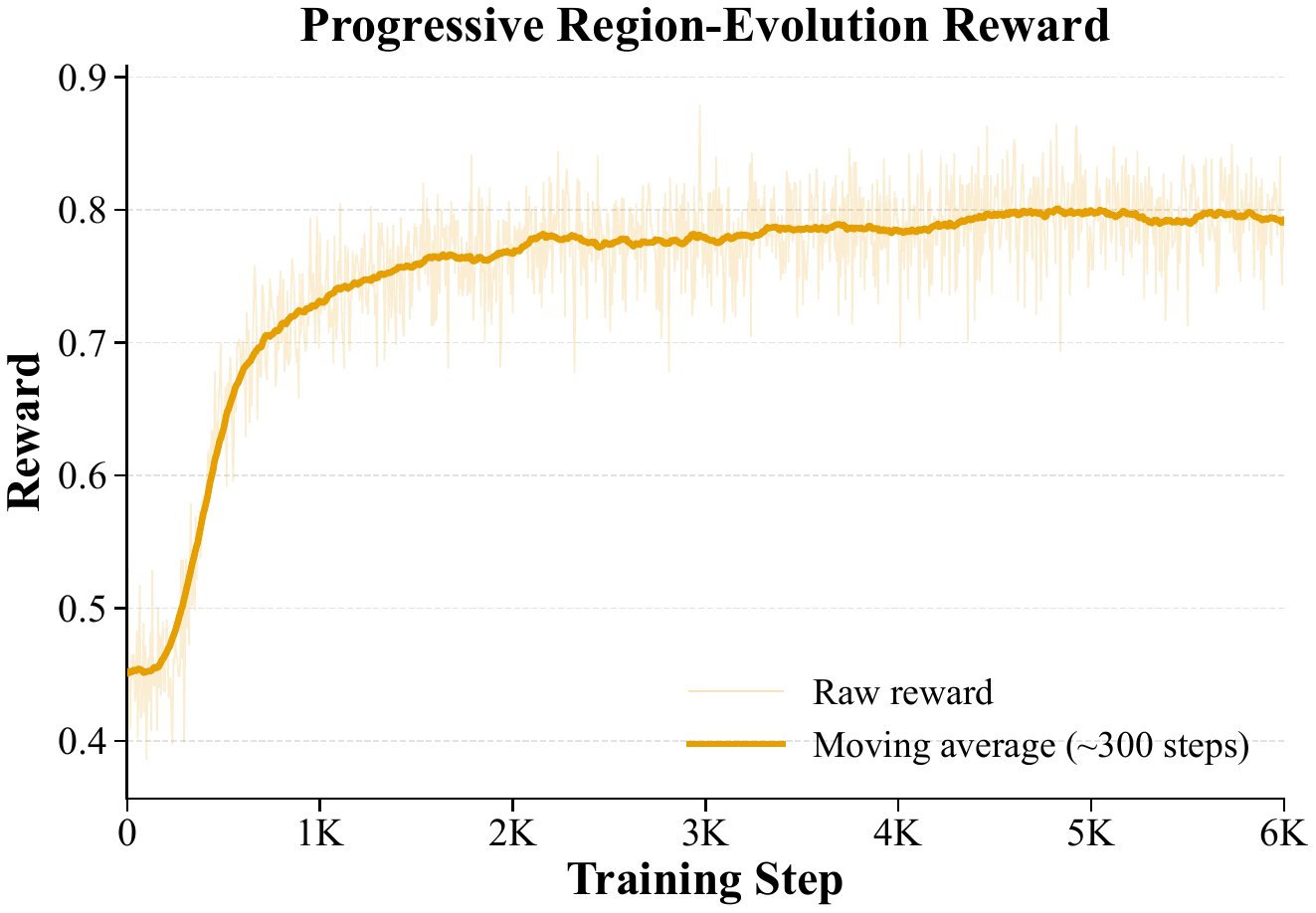}{0.225\textwidth}{25mm}\hfill
    \paperfigure[width=0.238\textwidth]{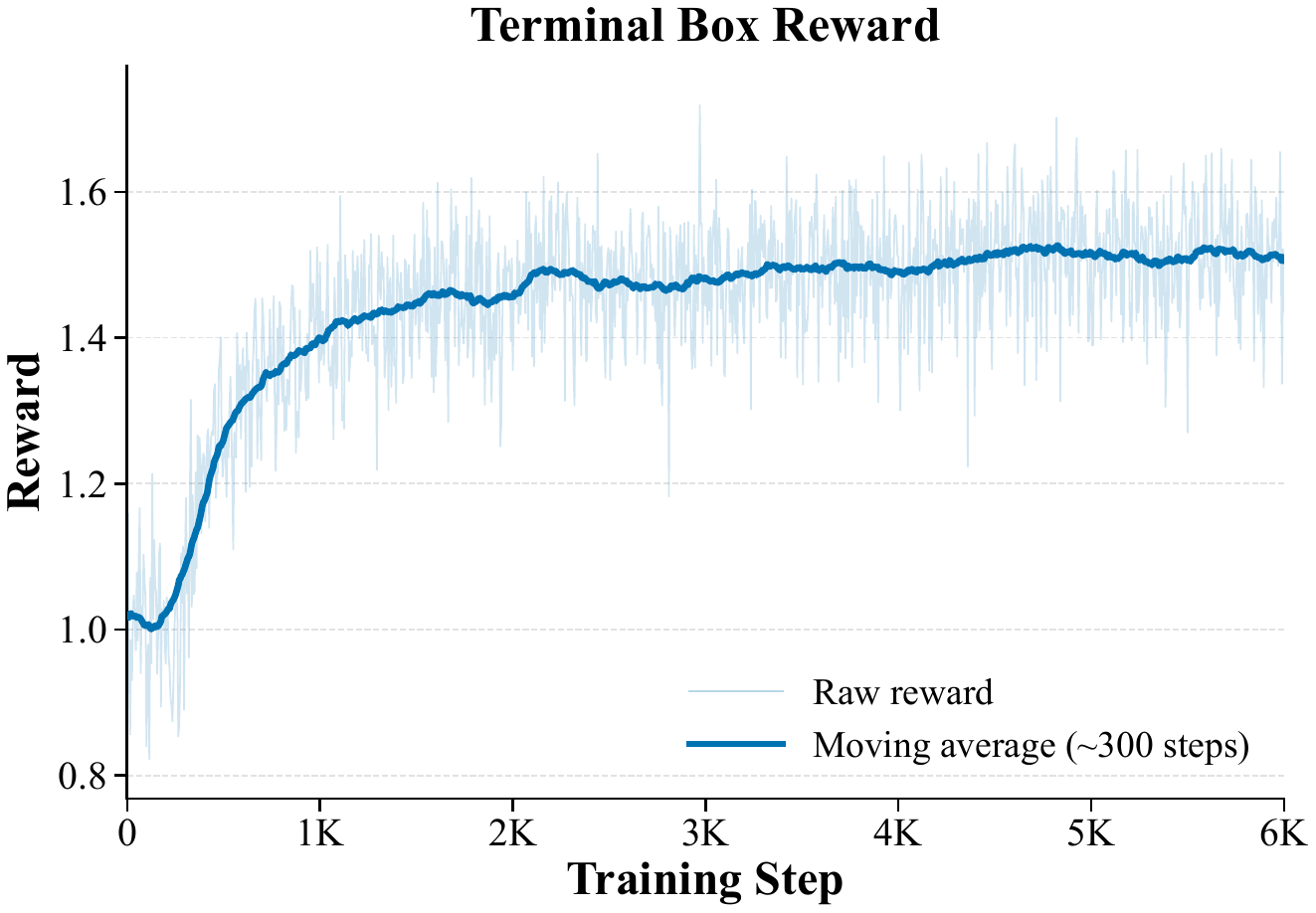}{0.225\textwidth}{25mm}
    \caption{Supplementary training dynamics for RER. For visualization, we show the first 6K optimization steps; this window does not indicate the total training duration. From left to right:
    total reward; format and Step-format rewards; progressive region-evolution
    reward; and terminal box reward.}
    \label{fig:rer_rewards}
\end{figure}

\subsection{Bidirectional Same-Position Refinement and Target Construction}
\label{app:bdr_details}

\paragraph{Editable state and masks.}
Every coordinate is written as four zero-padded digit tokens, so one box
contains sixteen digit positions.  Let $\mathcal M_{\mathrm{all}}$ be all
box digit positions in the assistant response, $\mathcal M_b$ the positions
of the last refinement box and the answer, and $\mathcal M_f$ the structural
tags and punctuation.  The normal strongly supervised region therefore has
$|\mathcal M_b|=32$.

BDR removes causal masking from the base model's Full Attention layers,
providing each same-position prediction with bidirectional access to the
global response context.  If $\mathbf z=F_{\phi}(I,q,\mathbf x^{\mathrm{in}})$
are logits aligned with the observed response, a digit candidate is
\begin{equation}
\widehat{x}_t=\arg\max_{v\in\mathcal V_{\mathrm{digit}}}z_{t,v},
\qquad t\in\mathcal M_{\mathrm{all}},
\end{equation}
where $\mathcal V_{\mathrm{digit}}$ contains the ten single-character
digits.  The 16 candidate digits associated with each box role are decoded
separately into one candidate box.

Same-position reconstruction predicts every editable coordinate from the same
observed response and retains alignment with its original response position.
Keeping the observed coordinate tokens visible provides an explicit input
reference against which candidate edits can be evaluated and accurate
coordinates can be preserved.
The decoded box therefore provides the synchronized candidate used by
coordinate regression, quality-gated target selection, entropy regularization,
and relative preference between the input and output.

\paragraph{Behavior-Level Supervision: Perturbation and Quality-Gated Targets.}
When the RER answer has IoU above $0.5$, it is selected for perturbation
with probability $0.3$.  The selected answer is replaced by a nonzero box
obtained through rejection sampling toward the configured IoU interval
$[0.2,0.5]$ relative to the annotated box. The sampler uses translation, shrink, expansion, or random
geometry and rejects zero-area boxes; after 200 unsuccessful attempts it
uses bounded coordinate offsets.  The answer, the last refinement box,
and earlier Steps whose IoU with the original answer exceeds $0.95$ are
replaced together to preserve spatial consistency among the perturbed
coordinate fields.

At each rejection-sampling attempt, the four proposal families are selected
with equal probability.  Their exact ranges are summarized in
Table~\ref{tab:app_training_perturbation}.  Every proposal is clipped to the
normalized coordinate range and accepted only when it has positive area and
$\operatorname{IoU}(\mathbf b_{\mathrm{proposal}},\mathbf b^{*})\in[0.2,0.5]$. The fallback independently offsets
the two horizontal coordinates within $\pm0.8$ target widths and the two
vertical coordinates within $\pm0.8$ target heights before clipping.

\begin{table}[!ht]
\centering
\setlength{\belowcaptionskip}{5pt}
\caption{Geometric proposals used to construct BDR training perturbations.
Probabilities refer to proposal selection at each rejection-sampling attempt;
the accepted proportions may differ because proposals are filtered by the
target IoU interval $[0.2,0.5]$.}
\label{tab:app_training_perturbation}
\small
\setlength{\tabcolsep}{4.0pt}
\begin{tabular}{lcl}
\toprule
Proposal & Probability & Parameter range \\
\midrule
Coordinate shift & $0.25$ & $s\sim\mathcal U(0.2,1.5)$; offsets in $[-sw,sw]$ and $[-sh,sh]$ \\
Centered shrink & $0.25$ & width and height ratio $r\sim\mathcal U(0.3,0.9)$ \\
Centered expansion & $0.25$ & width and height ratio $r\sim\mathcal U(1.3,3.0)$ \\
Random box & $0.25$ & valid integer endpoints in $[0,1000]$ \\
\bottomrule
\end{tabular}
\end{table}

In the coordinate-shift proposal, $w$ and $h$ denote the width and height of
the ground-truth box, and $s$ is the sampled offset-scale factor.

Let $\mathbf b_{\mathrm{ans}}^{\mathrm{in}}$ denote the actual Answer box seen by BDR after this
operation, and let $\widetilde{\mathbf b}_{\mathrm{ans}}$ be the hard
candidate decoded from its output.  With $\eta=0.9$ and
$\gamma=0.005$, the dynamic-copy condition is
\begin{equation}
\mathcal C_{\mathrm{out}}\equiv
\left(\operatorname{IoU}(\widetilde{\mathbf b}_{\mathrm{ans}},\mathbf b^{*})
>\eta\right)
\land
\left(\operatorname{IoU}(\widetilde{\mathbf b}_{\mathrm{ans}},\mathbf b^{*})
\geq \operatorname{IoU}(\mathbf b_{\mathrm{ans}}^{\mathrm{in}},\mathbf b^{*})+\gamma\right).
\end{equation}
If it is not activated and the unperturbed input has IoU at least $\eta$,
the static preservation branch is used.  The selected absolute target is
\begin{equation}
\mathbf b^{\mathrm{tar}}=
\begin{cases}
\widetilde{\mathbf b}_{\mathrm{ans}}, & \text{if }\mathcal C_{\mathrm{out}},\\
\mathbf b_{\mathrm{ans}}^{\mathrm{in}},&
\text{unperturbed and }\operatorname{IoU}(\mathbf b_{\mathrm{ans}}^{\mathrm{in}},\mathbf b^{*})\geq\eta,\\
\mathbf b^{*},&\text{otherwise}.
\end{cases}
\label{eq:app_target}
\end{equation}
The ground-truth box is used to construct this training target and the
geometry labels only; it is never provided as an input during inference.

\paragraph{Geometry-Level Supervision.}
For a target digit $y_t$, cross-entropy over positions $\mathcal M$ is
\begin{equation}
\mathcal L_{\mathrm{CE}}(\mathcal M)
=-\frac{1}{|\mathcal M|}\sum_{t\in\mathcal M}
\log\frac{\exp z_{t,y_t}}{\sum_v\exp z_{t,v}}.
\label{eq:app_masked_ce}
\end{equation}
Let $\widehat b^{\mathrm{hard}}_{sk}$ denote coordinate $k$ of the hard
argmax-decoded candidate box for role $s$. For the final Step and answer,
the four digits belonging to coordinate $k$ share one detached weight
obtained by averaging the two hard-coordinate errors:
\begin{equation}
w_k=1+\frac{w_{\mathrm{L1CE}}}{2}
\sum_{s\in\{\mathrm{last},\mathrm{ans}\}}
\frac{\left|\widehat b^{\mathrm{hard}}_{sk}-b^{\mathrm{tar}}_k\right|}{1000},
\qquad w_{\mathrm{L1CE}}=10.
\label{eq:app_l1_ce_weight}
\end{equation}
Each digit position $t$ of coordinate $k$ inherits $w_t=w_k$, and $\mathrm{CE}_t$ denotes its token cross-entropy. The weighted BBox CE divides by the sum of token weights:
\begin{equation}
\mathcal L_{\mathrm{box}}
=\frac{\sum_{t\in\mathcal M_b}w_t\,\mathrm{CE}_t}
{\sum_{t\in\mathcal M_b}w_t}.
\label{eq:app_weighted_box_ce}
\end{equation}
Consequently, L1-derived weights redistribute the gradient toward inaccurate
coordinates without changing the nominal CE scale.

The differentiable coordinate used by the regression term is obtained from
the ten-digit softmax at each digit position:
\begin{align}
p_{skj}(d)&=
\frac{\exp z_{t_{skj},d}}
{\sum_{a=0}^{9}\exp z_{t_{skj},a}},\nonumber\\
\bar b_{sk}&=\sum_{j=0}^{3}10^{3-j}\sum_{d=0}^{9}d\,p_{skj}(d),\nonumber\\
\mathcal L_{\mathrm{reg}}
&=\frac{1}{8}\sum_{s\in\{\mathrm{last},\mathrm{ans}\}}
\sum_{k=1}^{4}\frac{|\bar b_{sk}-b^{\mathrm{tar}}_k|}{1000}.
\label{eq:app_regression}
\end{align}
The code computes the same quantity as the mean over the eight supervised
coordinates.  CE preserves the discrete coordinate representation, whereas
the regression term supplies a continuous numerical direction.

\paragraph{State Preservation and Confidence Regularization.}
General reconstruction is applied to assistant tokens outside the coordinate
mask.  In the warm-up phase it excludes only $\mathcal M_b$, so a perturbed
historical Step can still receive the selected target through general CE.
After warm-up it excludes $\mathcal M_{\mathrm{all}}$ and therefore preserves
only non-coordinate assistant content.  Format CE is computed on
$\mathcal M_f$, including structural tags and coordinate punctuation.

For the ten-digit distribution $p_t$, natural entropy is
\begin{equation}
H(p_t)=-\sum_{d=0}^{9}p_t(d)\log p_t(d).
\label{eq:app_entropy}
\end{equation}
The historical entropy term averages $H(p_t)$ over
$\mathcal M_{\mathrm{all}}\setminus\mathcal M_b$.  The final entropy term
uses $\mathcal M_b$ but keeps only positions whose digit-vocabulary argmax
already equals the selected target digit.  Thus the former sharpens
unsupervised historical digits, while the latter does not make an incorrect
final digit more confident.

\paragraph{Coordinate-wise Edit Preference.}
For a fixed sample $i$ and every box role $s$ (all historical Steps, the last Step, and the
answer), let $\mathcal C_{sk}$ be the changed digit positions in coordinate
$k$. The input and output coordinate errors are compared to the original
annotation:
\begin{equation}
\Delta_{sk}=
\frac{|\breve b^{\mathrm{in}}_{sk}-b_k^{*}|-
|\widetilde b^{\mathrm{out}}_{sk}-b_k^{*}|}{1000}.
\label{eq:app_delta}
\end{equation}
Only coordinates with $\mathcal C_{sk}\neq\varnothing$ and
$|\Delta_{sk}|>m$ with $m=0.005$ are active; their index set is
$\mathcal A_s=\{k:\mathcal C_{sk}\neq\varnothing,\ |\Delta_{sk}|>m\}$.
Let $x_t^{\mathrm{in}}$ denote
the observed input digit token and $\widetilde{x}_t^{\mathrm{out}}$ the
digit-vocabulary argmax at position $t$. On changed digit positions,
\begin{equation}
D_{sk}=\frac{1}{|\mathcal C_{sk}|}
\sum_{t\in\mathcal C_{sk}}
\left(z_{t,\widetilde{x}_t^{\mathrm{out}}}
-z_{t,x_t^{\mathrm{in}}}\right),\qquad
y_{sk}=\mathbf{1}[\Delta_{sk}>m].
\label{eq:app_pref_margin}
\end{equation}
The coordinate loss is the expanded binary cross-entropy with logits:
\begin{equation}
\ell_{sk}
=-y_{sk}\log\sigma(D_{sk})
-(1-y_{sk})\log\left(1-\sigma(D_{sk})\right),
\qquad \sigma(D)=\frac{1}{1+\exp(-D)}.
\label{eq:app_pref_bce}
\end{equation}
For $\mathcal A_s\neq\varnothing$, active coordinates within one role are weighted by
$a_{sk}=\min(1,|\Delta_{sk}|/0.05)$:
\begin{equation}
\mathcal L_{\mathrm{pref},s}^{(i)}
=\frac{\sum_{k\in\mathcal A_s}a_{sk}\ell_{sk}}
{\sum_{k\in\mathcal A_s}a_{sk}}.
\end{equation}
We set $\mathcal L_{\mathrm{pref},s}^{(i)}=0$ when $\mathcal A_s=\varnothing$.
The answer and the last Step have role weight $0.5$, and every earlier Step
has role weight $0.2$.  The per-sample and minibatch losses are
\begin{equation}
\mathcal L_{\mathrm{pref}}^{(i)}=
\sum_{s\in\mathcal S}w_s\mathcal L_{\mathrm{pref},s}^{(i)},
\qquad
\mathcal L_{\mathrm{pref}}
=\frac{1}{N}\sum_{i=1}^{N}
\mathcal L_{\mathrm{pref}}^{(i)}.
\label{eq:app_pref_total}
\end{equation}
The preference branches are added directly to the base loss.  Hence a sample
can contribute one active preference term per role, while coordinates inside
each role are normalized by their own active magnitude weights.  Samples
without an active preference coordinate contribute zero but remain in the
minibatch denominator $N$.

\paragraph{Phase-specific objectives.}
During the first $3\%$ of optimizer updates, the training loss is
\begin{equation}
\mathcal L_{\mathrm{early}}
=1.0\,\mathcal L_{\mathrm{box}}
 +0.2\,\mathcal L_{\mathrm{gen}}.
\label{eq:app_early_loss}
\end{equation}
Perturbed historical Step labels are covered by the general CE mask in this
phase, and $10\%$ of eligible assistant input tokens are replaced by the
configured mask token.  No token masking is applied after warm-up.  BDR then
uses
\begin{align}
\mathcal L_{\mathrm{mid}}
={}&1.0\,\mathcal L_{\mathrm{box}}
 +30\,\mathcal L_{\mathrm{reg}}
 +0.5\,\mathcal L_{\mathrm{fmt}}
 +0.2\,\mathcal L_{\mathrm{gen}}\nonumber\\
&+0.2\,\mathcal H_{\mathrm{hist}}
 +0.2\,\mathcal H_{\mathrm{fin}}
 +\mathcal L_{\mathrm{pref}}.
\label{eq:app_mid_loss}
\end{align}
Only the last Step and answer enter $\mathcal L_{\mathrm{box}}$,
$\mathcal L_{\mathrm{reg}}$, and $\mathcal H_{\mathrm{fin}}$; historical
boxes enter $\mathcal H_{\mathrm{hist}}$.  The total sample losses are
divided by the effective minibatch size before back-propagation.

Figure~\ref{fig:bdr_losses} follows the complete BDR objective and reports
the optimization dynamics of its coordinate, reconstruction, confidence, and
preference terms.

\begin{figure}[!htb]
    \centering
    \paperfigure[width=0.238\textwidth]{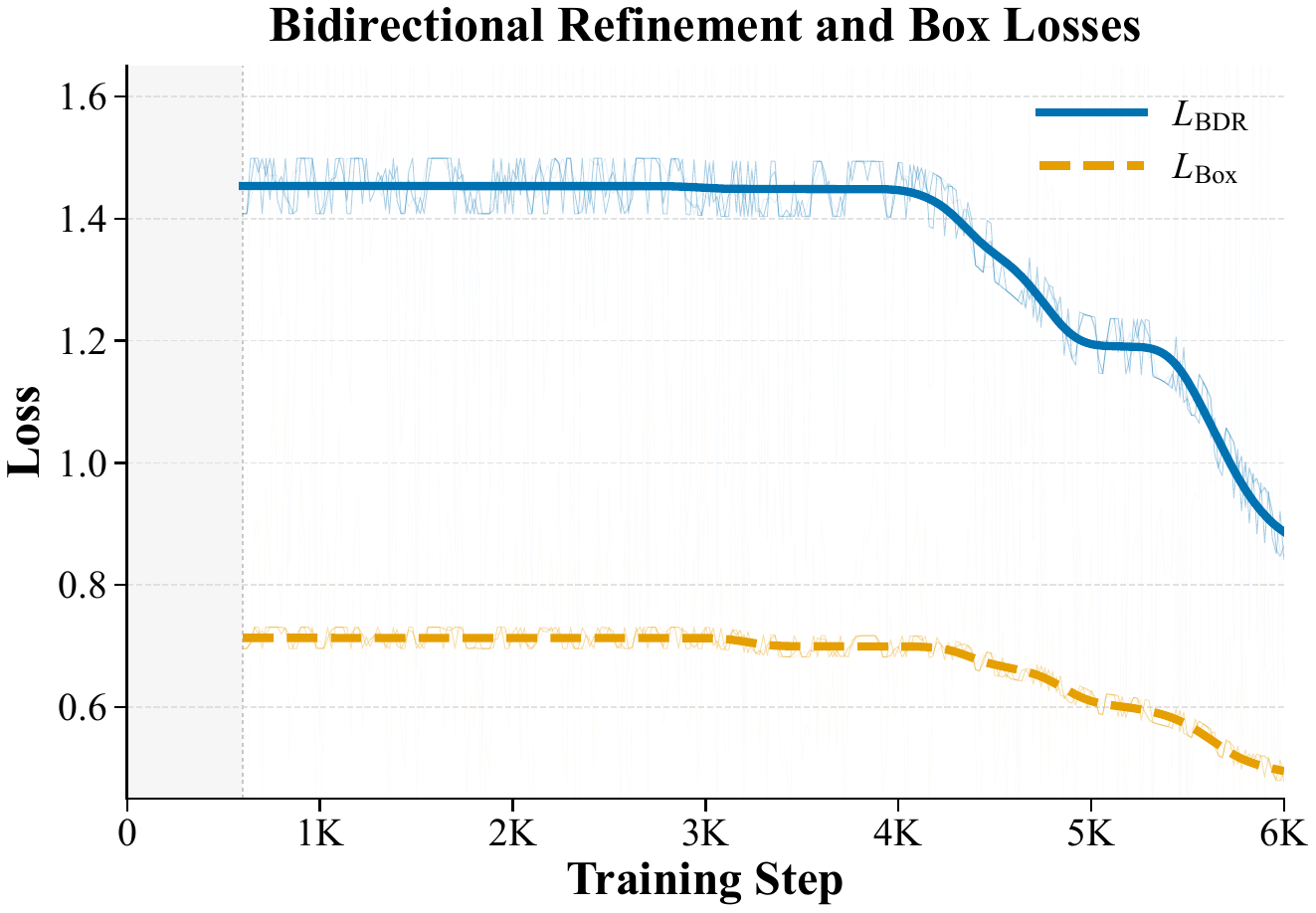}{0.225\textwidth}{25mm}\hfill
    \paperfigure[width=0.238\textwidth]{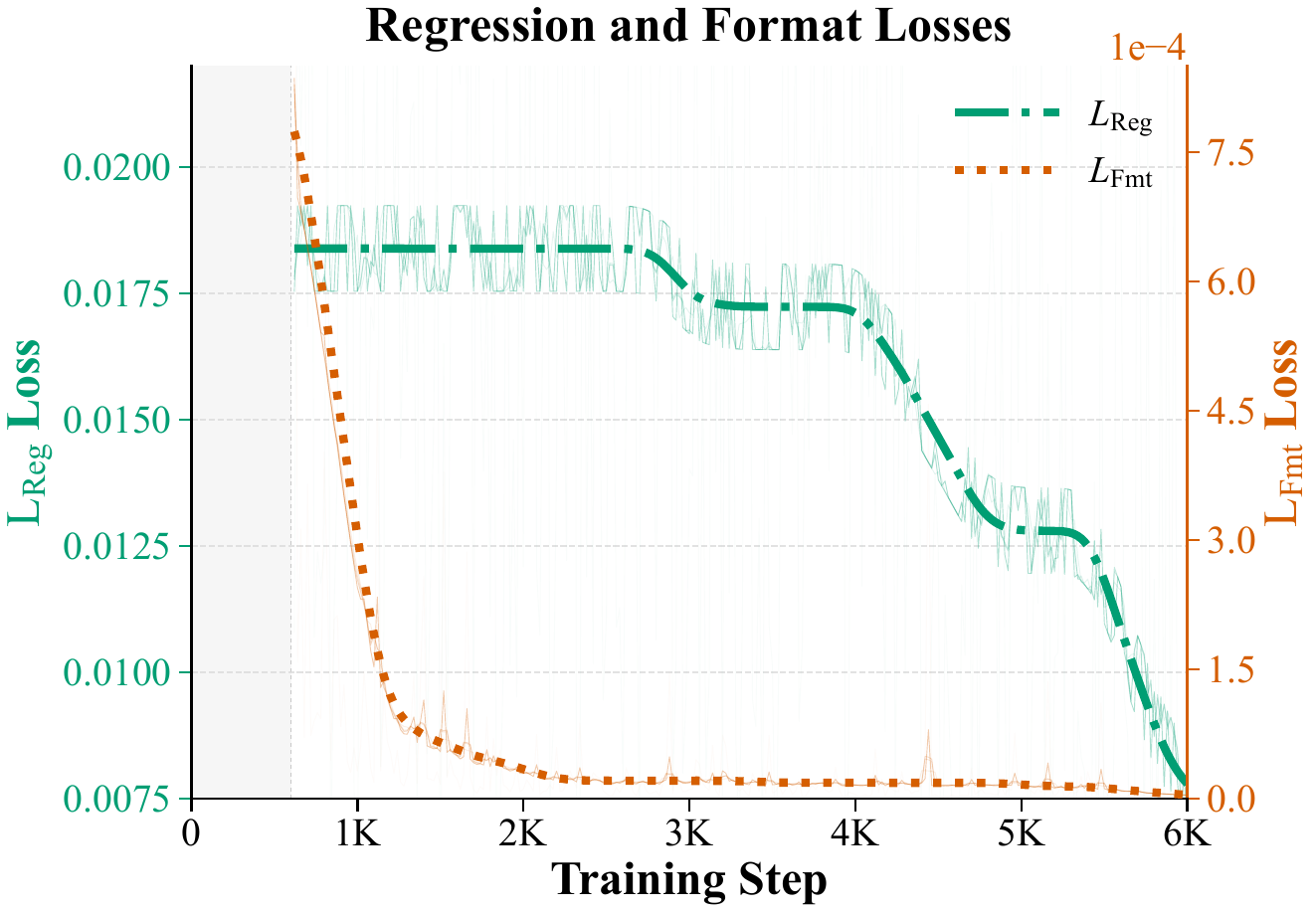}{0.225\textwidth}{25mm}\hfill
    \paperfigure[width=0.238\textwidth]{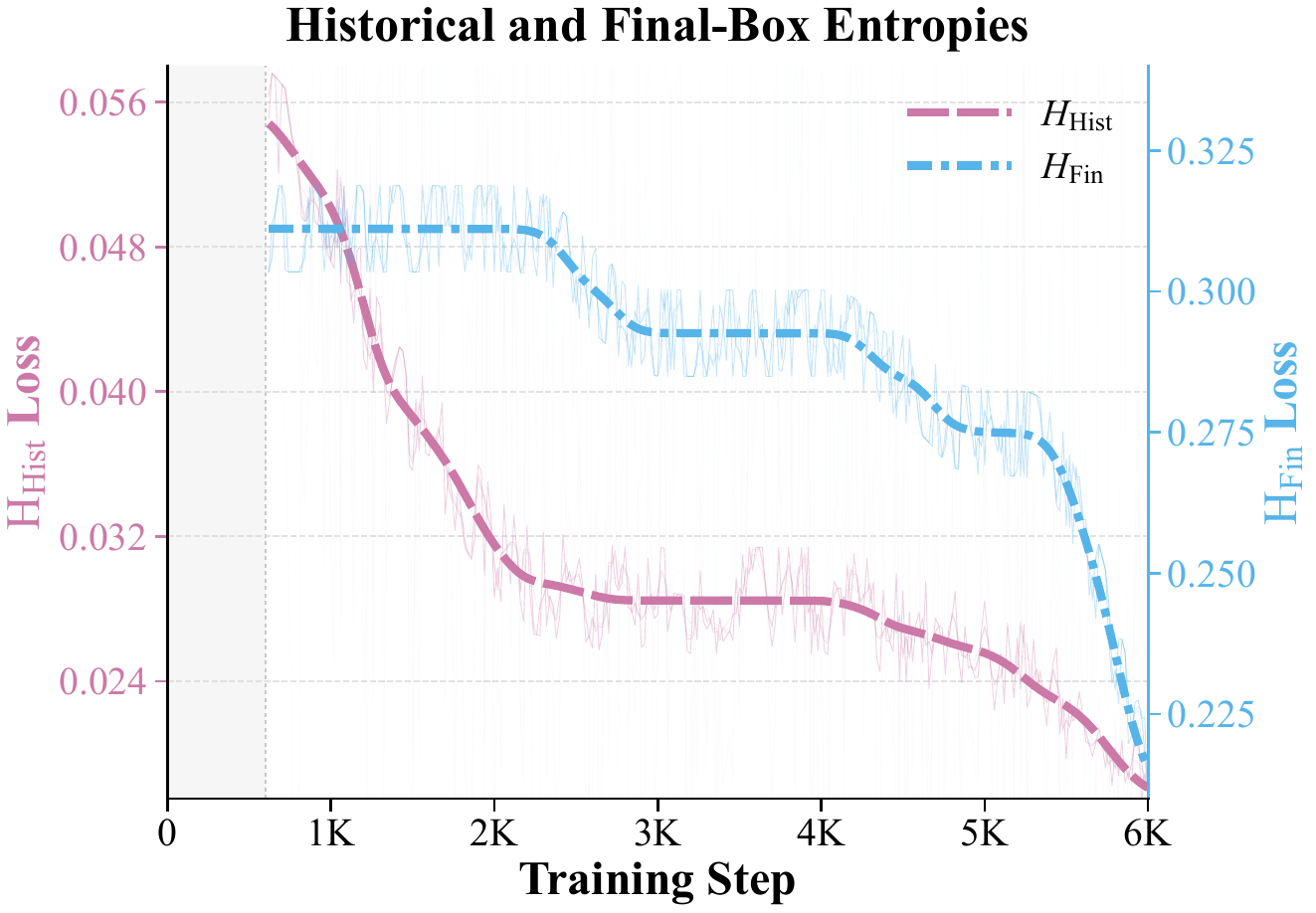}{0.225\textwidth}{25mm}\hfill
    \paperfigure[width=0.238\textwidth]{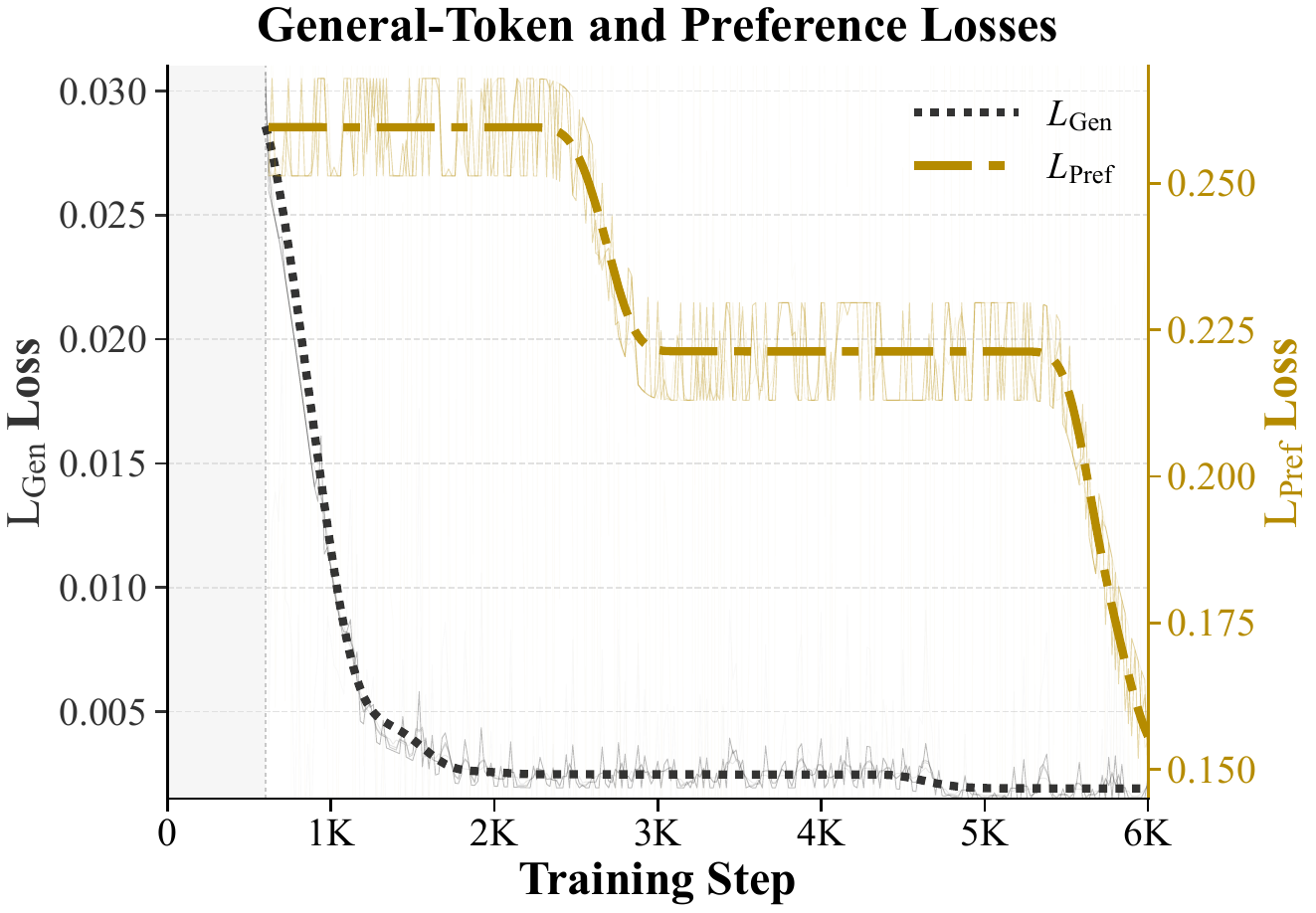}{0.225\textwidth}{25mm}
    \caption{Supplementary training dynamics for BDR. For visualization, we show the first 6K optimization steps; this window does not indicate the total training duration. From left to right:
    total and coordinate cross-entropy losses; coordinate regression and
    format losses; historical and final coordinate entropies; and general
    reconstruction and coordinate-wise preference loss.}
    \label{fig:bdr_losses}
\end{figure}

\subsection{Inference Procedure}
\label{app:inference}

At inference, RER first generates the complete localization state. BDR then
performs a deterministic same-position pass on the current state. After each
pass, only the editable coordinate positions are updated; the surrounding
reasoning text and response structure are retained. The response is then
detokenized and tokenized again, coordinate masks are rebuilt, and the updated
state becomes the next-pass input. We use two BDR passes for all standard
natural-image and remote-sensing evaluations, providing a consistent
refinement budget across settings. Unparseable final \structtag{answer} boxes remain in the evaluation
denominator and receive zero IoU.

\subsection{Prompt and Evaluation Protocols}
\label{app:prompt_protocol}

Figure~\ref{fig:prompt_protocols} consolidates the standard-grounding and
controlled-correction interfaces in one prompt plate.  The structured base
condition and RER use the same structured prompt and parsing rule; the former
uses the frozen backbone directly, whereas the latter activates the RER
adapter.  In controlled correction, every method receives the same image,
referring expression $q$, and deterministic input anchor
$\mathbf b^{\mathrm{in}}$.  The displayed line breaks are typographic, and the
placeholders are replaced by the corresponding sample values at inference.

\begin{figure*}[t]
\centering
\begingroup
\setlength{\fboxsep}{5pt}
\setlength{\fboxrule}{0.55pt}
\newcommand{\protocolbar}[1]{%
  \par\vspace{2.5pt}\noindent
  \colorbox{black!13}{\parbox{\dimexpr\linewidth-2\fboxsep\relax}{\bfseries #1}}%
  \par\vspace{1.5pt}}
\newcommand{\promptrole}[1]{\textbf{#1}\;}
\newcommand{\promptkey}[1]{\textcolor{red!72!black}{\textbf{#1}}}
\fbox{%
\begin{minipage}{0.972\textwidth}
\scriptsize
\raggedright\sloppy
\noindent\colorbox{black}{\parbox{\dimexpr\linewidth-2\fboxsep\relax}{%
\color{white}\bfseries\fontsize{8.2}{9.0}\selectfont Prompt interfaces used in CoEvolve evaluations}}

\begin{minipage}[t]{0.493\linewidth}
\protocolbar{(a) Standard grounding}

\promptrole{Native Qwen3.5 interface}
\structtag{image}\par
Locate every object that matches the description ``$q$'' in the image.
Report bbox coordinates in JSON format using \texttt{[x1, y1, x2, y2]}
normalized to 0--1000.

\protocolbar{Structured Qwen3.5 and RER interface}
\promptrole{User prompt}
You are a precise visual grounding assistant. Locate the target object
described in the text. You must reason step by step, progressively narrowing
from a broad scene region to a tight box. This \promptkey{coarse-to-fine}
process forces you to deeply understand the whole image and the object's
context, preventing misidentification and leading to a more accurate final
localization.

\medskip
Follow the structure and style of the example below.

\medskip
\structtag{refine}\par
Step 1: The scene shows a wooden desk with a laptop, a cup, a blue notebook,
and a blue magazine. $\rightarrow$ [100, 50, 900, 850]\par
Step 2: Focus on the right side of the desk, ignoring the laptop and the cup
on the left; both the blue notebook and the blue magazine are visible in this
area. $\rightarrow$ [400, 200, 850, 750]\par
Step 3: Among the two blue items, the notebook is lying flat near the desk
edge, while the magazine is further back and partially underneath it; identify
the notebook as the closer one. $\rightarrow$ [455, 325, 820, 700]\par
Step 4: Zoom in on the blue notebook itself, excluding the magazine completely;
its cover is facing up with a pen resting beside it. $\rightarrow$
[510, 345, 750, 650]\par
Step 5: Tightly enclose only the blue notebook---its full cover and
spine---trimming away the pen and all surrounding background. $\rightarrow$
[523, 382, 719, 620]\par
\structtag{/refine}\par
\structtag{answer}[523, 382, 719, 620]\structtag{/answer}

\medskip
\promptrole{Requirements}
Use \promptkey{3--5 steps}, each with \promptkey{exactly one bounding box}
\texttt{[x1, y1, x2, y2]} (integers 0--1000). Each box must be tighter than
the previous, covering less background and fitting the target more precisely.
For each step, first provide a brief scene observation locating the target;
then use $\rightarrow$ as a delimiter followed by the box. The final
\structtag{answer} box must be \promptkey{identical to the last step's box}.
Output only the two XML tags; no text outside them. If no matching object
exists, the last step's box and \structtag{answer} must both be \texttt{[]}.

\medskip
\structtag{image}\par
Text description: $q$
\end{minipage}
\hfill
\begin{minipage}[t]{0.493\linewidth}
\protocolbar{(b) BDR refinement}

\promptrole{User prompt}
You are a precise visual grounding refinement assistant. Your job is to
verify and refine bounding box coordinates through multi-step reasoning by
carefully examining the image.

\medskip
\promptrole{Key rules}
\textbullet\ If the bounding box in \structtag{answer} already tightly and
accurately encloses the target object $\rightarrow$
\promptkey{KEEP IT EXACTLY AS-IS}. Do not change coordinates that are already
correct.\par
\textbullet\ The final \structtag{answer} is the authoritative result. The
last step's bbox in \structtag{refine} \promptkey{MUST be consistent} with the
\structtag{answer} bbox.\par
\textbullet\ If you believe the reasoning chain contains errors, you may modify
the relevant steps---update their bounding boxes as needed. You do NOT need to
make every single step's bbox perfectly precise; intermediate steps serve as
progressive reasoning toward the final answer.

\medskip
\promptrole{Guidelines}
\textbullet\ Carefully examine the image to locate the target object described
in the text description below.\par
\textbullet\ If the answer box is already precise $\rightarrow$ output the
SAME reasoning chain unchanged.\par
\textbullet\ If the reasoning chain has inaccuracies $\rightarrow$ refine the
relevant steps and the answer progressively, with each corrected step moving
closer to the correct location.\par
\textbullet\ Keep the same number of steps and overall structure; adjust
coordinates only where they are wrong.\par
\textbullet\ All coordinates use xyxy format normalized to 0--1000,
\promptkey{zero-padded to 4 digits}. For example: 585 $\rightarrow$ ``0585'',
0 $\rightarrow$ ``0000'', 1000 $\rightarrow$ ``1000''; a complete bbox is
\texttt{[0585, 0173, 0797, 0710]}.\par
\textbullet\ The final \structtag{answer} box must tightly enclose only the
target object, trimming away all surrounding background.

\medskip
\structtag{image}\par
Text description: $q$

\protocolbar{(c) Controlled box correction baselines}

\promptrole{Qwen3.5-9B}
A candidate bounding box $\mathbf b^{\mathrm{in}}$ is provided for ``$q$''
but \promptkey{may be inaccurate}. Locate every object that matches the
description ``$q$'' in the image. Report bbox coordinates in JSON format using
\texttt{[x1, y1, x2, y2]} normalized to 0--1000.

\medskip
\promptrole{InternVL3.5-8B}
\structtag{image} Please provide the bounding box coordinate of
\structtag{ref}$q$\structtag{/ref} in this image. A candidate box
$[\mathbf b^{\mathrm{in}}]$ is provided as reference but
\promptkey{may be inaccurate}.

\medskip
\promptrole{CogVLM-Grounding-17B}
The candidate bounding box for $q$ is $[\mathbf b^{\mathrm{in}}]$, but it may
be inaccurate. Can you \promptkey{point out} $q$ in the image and provide the
\promptkey{corrected bounding box} of its location?
\end{minipage}
\end{minipage}}
\endgroup
\caption{Exact prompt templates used for standard grounding, BDR refinement,
and controlled box correction. Red bold text marks the constraints that define
the output interface or the preserve-or-correct behavior. Only line wrapping
and sample-specific placeholders are changed for typesetting.}
\label{fig:prompt_protocols}
\end{figure*}

\paragraph{Reproduced standard-grounding baselines.}
The reproduced entries in Table~\ref{tab:stringent_iou_v4} use each model's
native grounding interface rather than the structured RER prompt.  Their
shared evaluation conventions are summarized in
Table~\ref{tab:app_reproduced_protocols}.  Each model produces one
autoregressive response per example.  Predictions are evaluated as
normalized $xyxy$ coordinates in $[0,1000]$; a single complete final box is
extracted, while invalid or unparseable outputs remain in the denominator
with zero IoU.

\begin{table}[!ht]
\centering
\setlength{\belowcaptionskip}{5pt}
\caption{Protocols for the locally reproduced standard-grounding baselines
in Table~\ref{tab:stringent_iou_v4}.  ``Native'' denotes the prompt interface
distributed or recommended for the corresponding model family.}
\label{tab:app_reproduced_protocols}
\small
\setlength{\tabcolsep}{3.5pt}
\begin{tabular*}{\columnwidth}{@{\extracolsep{\fill}}ll@{}}
\toprule
Model & Native prompt interface \\
\midrule
InternVL3.5-8B & Referring box with \texttt{<ref>} expression \\
Qwen3-VL-8B & JSON grounding with normalized box \\
Qwen3.5-9B & JSON grounding with normalized box \\
CogVLM-Grounding-17B & Caption-to-box grounding \\
\bottomrule
\end{tabular*}
\end{table}

\paragraph{Native-output-derived state construction.}
For the Native-Output BDR condition, unadapted Qwen3.5-9B first produces
visible natural-language analysis and a terminal box through the native grounding
interface in Figure~\ref{fig:prompt_protocols}. Auxiliary \structtag{think}
content is removed. The remaining visible analysis, including its terminal box,
is retained verbatim inside a \structtag{refine} block; the same zero-padded box
is placed in \structtag{answer}. For example, the processed state for the query
``the man in yellow coat'' is
\begin{quote}
\small
\textbf{Example native-output-derived state.}
\structtag{refine} Got it, let's look at the image. The man in the yellow coat
is the one on the right. So we need to find his bounding box. Let's check the
coordinates. The yellow coat, red pants, and the bib. So the region is
\texttt{[0585, 0185, 0793, 0696]}. \structtag{/refine}
\structtag{answer}\texttt{[0585, 0185, 0793, 0696]}\structtag{/answer}
\end{quote}
This state contains a terminal localization hypothesis but
no RER-trained ordered region commitments. BDR keeps the \structtag{refine}
content fixed and performs same-position editing only on the 16 Answer digits.
A separate native-output editor is trained from scratch on these
native-output-derived training states; its parameters are independent of the
BDR trained on RER-generated states.
Training uses Answer coordinate cross-entropy, soft-coordinate regression,
correctness-gated Answer entropy, coordinate-wise preference, format
cross-entropy, and general reconstruction of the unchanged \structtag{refine}
content. The annotated box is used only for target and loss construction and is
never inserted into the model input.

\paragraph{Controlled-correction BDR state.}
For BDR, the shared corrupted anchor is represented exactly as
\begin{center}
\small\ttfamily
\structtag{refine} The target object is located at coordinates
$[\mathbf b^{\mathrm{in}}]$. \structtag{/refine}
\structtag{answer}$[\mathbf b^{\mathrm{in}}]$\structtag{/answer}
\end{center}
There are no earlier numbered Steps. The two occurrences of
$\mathbf b^{\mathrm{in}}$ form the editable final-refinement and Answer roles,
while the sentence, tags, and punctuation remain fixed. Autoregressive
baselines receive the same image, expression, and corrupted anchor through the
templates in Figure~\ref{fig:prompt_protocols}. No method receives the annotated
box as input.

For an image of width $W$ and height $H$, pixel coordinates are normalized as
\begin{equation}
x'=\operatorname{round}(1000x/W),\qquad
y'=\operatorname{round}(1000y/H),
\label{eq:app_coordinate_normalization}
\end{equation}
and clipped to $[0,1000]$.  RER and the native baseline use deterministic
decoding for reported evaluation, while BDR applies digit-vocabulary argmax at
the aligned coordinate positions.  The evaluator parses one complete legal
box from the final \structtag{answer} field.  Invalid or unparseable predictions
remain in the corresponding split denominator and receive zero IoU.

\subsection{Implementation Notes}
\label{app:hyperparameters}

We implement RER and BDR on Qwen3.5-9B and train their adapters sequentially.
RER and BDR initialize separate adapters, and standard evaluation uses one RER
generation followed by two deterministic BDR passes.  Complete optimization
configurations, adapter settings, execution commands, and hardware details are
provided in the anonymous supplementary code.  All method-defining objectives,
target rules, perturbation procedures, decoding protocols, and evaluation
conventions are specified in the paper and appendices.

\section{Formal Properties}
\label{app:formal_properties}

The following propositions characterize boundedness, normalization, gradient
direction, and entropy gating in the implemented objectives.

\begin{proposition}[Bounded progressive signal]
The implemented progressive component satisfies
$0\leq C_{\mathrm{prog}}\leq1$.  Consequently,
$0\leq R_{\mathrm{prog}}\leq R_{\mathrm{fmt}}$.
\end{proposition}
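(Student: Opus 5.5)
The bound $C_{\mathrm{prog}}\geq 0$ is immediate, since Eq.~(\ref{eq:app_progress}) defines $C_{\mathrm{prog}}$ as a maximum with $0$. The substantive part is the upper bound. I would obtain it by a \emph{telescoping of running maxima}, applied separately to the IoU channel and the center-quality channel.

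First I would record the ranges of the raw quantities. We have $u_i\in[0,1]$ since it is an IoU. Since $d_i^{\mathrm{norm}}$ is a distance between points of $[0,1]^2$, it lies in $[0,\sqrt2]$, so $q_i\in[0,1]$.

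Next define the running maxima $M_i^u=\max_{j\le i}u_j$, with $M_1^u=u_1$. For each $i\ge2$,
\[
u_i-M_{i-1}^u\le \max(0,u_i-M_{i-1}^u)=M_i^u-M_{i-1}^u .
\]
Summing over $i=2,\dots,K$ telescopes to
\[
\sum_{i=2}^K (u_i-M^u_{i-1})\le M_K^u-u_1\le 1-u_1 .
\]
Dividing by $1-u_1+\epsilon>0$ gives
\[
\sum_{i\ge2}\Delta_i^u\le \frac{1-u_1}{1-u_1+\epsilon}\le 1 .
\]
The identical argument with $q$ in place of $u$, using $q_i\le1$, gives $\sum_{i\ge2}\Delta_i^q\le1$. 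Since $g_i$ is a combination of $\Delta_i^u$ and $\Delta_i^q$ with nonnegative weights $0.6$ and $0.4$ summing to one, $\sum_i g_i\le 1$, and therefore $C_{\mathrm{prog}}\le 1$.

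The main subtlety is that individual $g_i$ (and individual $\Delta_i^u$, $\Delta_i^q$) may be negative, and the normalizer uses $u_1$ rather than the running maximum. The telescoping inequality handles both issues at once. It only requires an upper bound on each increment, and the numerator total $M_K^u-u_1$ is controlled by the same $1-u_1$ that appears in the denominator.

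For $R_{\mathrm{prog}}$, I would first dispose of the degenerate case. If the trajectory has a Step count outside $3$--$5$ or contains an invalid Step box, then $R_{\mathrm{prog}}=0$, and $0\le R_{\mathrm{fmt}}$ holds trivially because $R_{\mathrm{fmt}}\in\{0,0.5,1\}$. Otherwise I would check that each bracketed term in Eq.~(\ref{eq:app_prog_reward}) lies in $[0,1]$:
\begin{itemize}
\item $C_{\mathrm{area}}$ is either $1$ or a sigmoid value, and $\exp(-d_1^{\mathrm{center}}/350)\in(0,1]$, so $C_{\mathrm{cov}}\in[0,1]$.
\item $C_{\mathrm{overall}}\ge0$ by its definition as a maximum with $0$, and $C_{\mathrm{overall}}\le (1-u_1)/(1-u_1+\epsilon)\le1$.
\item $C_{\mathrm{cons}}$ is an IoU, so it lies in $[0,1]$.
\item $C_{\mathrm{prog}}\in[0,1]$ by the argument above.
\end{itemize}
The weights $0.20,0.50,0.20,0.10$ are nonnegative and sum to one, so the bracket is a convex combination of values in $[0,1]$ and hence lies in $[0,1]$. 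Multiplying by $R_{\mathrm{fmt}}\ge0$ gives $0\le R_{\mathrm{prog}}\le R_{\mathrm{fmt}}$.
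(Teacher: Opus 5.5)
Your proposal is correct and takes essentially the same route as the paper. Both proofs bound each best-so-far increment by the nonnegative running-maximum increment, telescope to $1-u_1$ (resp.\ $1-q_1$), and then use the $0.6/0.4$ convex weighting, the outer truncation, the unit-sum coefficients of $R_{\mathrm{prog}}$, and $R_{\mathrm{fmt}}\in[0,1]$; yours is just more explicit about $q_i\in[0,1]$, the positivity of the $\epsilon$-stabilized denominators, and the degenerate zero-reward case.
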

\begin{proof}
Let $m_i^{u}=\max_{j\leq i}u_j$ and $m_i^{q}=\max_{j\leq i}q_j$.
Every positive best-so-far increment telescopes, and therefore
\begin{equation}
\sum_{i=2}^{K}\frac{m_i^{u}-m_{i-1}^{u}}{1-u_1+\epsilon}\leq1,
\qquad
\sum_{i=2}^{K}\frac{m_i^{q}-m_{i-1}^{q}}{1-q_1+\epsilon}\leq1.
\end{equation}
The non-positive increments retained in Eq.~(\ref{eq:app_progress}) can only
decrease the sum.  Its $0.6/0.4$ convex weighting and outer truncation thus
give $C_{\mathrm{prog}}\in[0,1]$.  The coverage, overall-gain, and consistency
terms also lie in $[0,1]$, and their coefficients in
Eq.~(\ref{eq:app_prog_reward}) sum to one.  Multiplication by
$R_{\mathrm{fmt}}\in[0,1]$ proves the second bound.
\end{proof}

\begin{proposition}[Normalized coordinate weighting]
For positive coordinate weights, the weighted box cross-entropy in
Eq.~(\ref{eq:app_weighted_box_ce}) is invariant to a common positive scaling
of all weights and remains between the minimum and maximum selected token
losses.
\end{proposition}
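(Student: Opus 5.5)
The plan is to read Eq.~(\ref{eq:app_weighted_box_ce}) as a convex combination of the selected token losses and derive both claims from that reading. Write $\mathcal M_b$ for the selected positions, $w_t>0$ for their weights, and $W=\sum_{t\in\mathcal M_b}w_t$. Define normalized coefficients $\lambda_t=w_t/W$. These satisfy $\lambda_t>0$ and $\sum_t\lambda_t=1$, and
\[
\mathcal L_{\mathrm{box}}=\sum_{t\in\mathcal M_b}\lambda_t\,\mathrm{CE}_t .
\]
Positivity of the weights is automatic for the implemented choice: Eq.~(\ref{eq:app_l1_ce_weight}) gives $w_k\ge1$, and each digit position inherits its coordinate's weight. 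The warm-up choice of equal weights is also covered. Since $\mathcal M_b$ is nonempty ($|\mathcal M_b|=32$), $W>0$ and the ratio is well defined.

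\emph{Scale invariance.} Replace every $w_t$ by $cw_t$ with $c>0$. Then $W$ becomes $cW$, the factor $c$ cancels in each $\lambda_t$, and so $\mathcal L_{\mathrm{box}}$ is unchanged.

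\emph{Bounds.} Let $m=\min_{t}\mathrm{CE}_t$ and $M=\max_t\mathrm{CE}_t$ over $\mathcal M_b$. Multiply $m\le \mathrm{CE}_t\le M$ by $\lambda_t\ge0$ and sum. Using $\sum_t\lambda_t=1$, this gives
\[
m\le\mathcal L_{\mathrm{box}}\le M .
\]

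No step is a genuine obstacle. The only point needing care is the hypotheses: the denominator must be positive and the weights nonnegative. I would state explicitly that the weights in Eq.~(\ref{eq:app_l1_ce_weight}) are detached and at least one, so they act as fixed positive constants, and that the invariance concerns the loss value, not how gradients flow through the weights. It is also worth noting, as a consequence, that the nominal scale matches the unweighted mean. Equal weights recover $\mathcal L_{\mathrm{CE}}(\mathcal M_b)$ from Eq.~(\ref{eq:app_masked_ce}), so reweighting only redistributes emphasis within the range $[m,M]$.
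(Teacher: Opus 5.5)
Your proposal is correct and follows the paper's own argument: normalizing to $\alpha_t=w_t/\sum_j w_j$ writes $\mathcal L_{\mathrm{box}}$ as a convex combination of the selected token losses, which gives the min--max bounds, and a common positive scaling cancels in each $\alpha_t$. Your checks that the implemented weights satisfy $w_k\ge 1$ and that $\mathcal M_b$ is nonempty make explicit hypotheses the paper leaves implicit.
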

\begin{proof}
Writing $c_t$ for per-token cross-entropy and
$\alpha_t=w_t/\sum_jw_j$ gives
$\mathcal L_{\mathrm{box}}=\sum_t\alpha_tc_t$, where
$\alpha_t>0$ and $\sum_t\alpha_t=1$.  The loss is therefore a convex
combination of the selected token losses.  Replacing every $w_t$ by
$\lambda w_t$ for $\lambda>0$ leaves each $\alpha_t$ unchanged, which proves
both claims.
\end{proof}

\begin{proposition}[Preference-gradient direction]
For every active coordinate, gradient descent increases the output-versus-input
margin for a beneficial edit and decreases it for a detrimental edit.
\end{proposition}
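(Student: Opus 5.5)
We treat the preference loss of an active coordinate as a function of its margin and differentiate through it. Fix a sample, a role $s$, and an active coordinate $k\in\mathcal A_s$, so that $\mathcal C_{sk}\neq\varnothing$ and $|\Delta_{sk}|>m$. Activity forces exactly one of two cases. Either $\Delta_{sk}>m$, a beneficial edit with $y_{sk}=1$, or $\Delta_{sk}<-m$, a detrimental edit with $y_{sk}=0$.

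The key observation is that the quantities $y_{sk}$, $a_{sk}$, $w_s$, and the active set $\mathcal A_s$ are all built from hard argmax decodings and the annotation. We treat them as detached constants, since they do not depend differentiably on the logits. Hence the only trainable quantity inside $\ell_{sk}$ is the margin $D_{sk}$. We also take the argmax index $\widetilde x_t^{\mathrm{out}}$ and the input digit $x_t^{\mathrm{in}}$ as fixed labels, so $D_{sk}$ is a linear function of the two logits at each changed position.

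Using the standard identity for the binary cross-entropy with logits in Eq.~(\ref{eq:app_pref_bce}), we compute
\[
\frac{\partial \ell_{sk}}{\partial D_{sk}}=\sigma(D_{sk})-y_{sk}.
\]
The total loss $\mathcal L_{\mathrm{pref}}$ contains $\ell_{sk}$ with the coefficient $w_s a_{sk}/(N\sum_{k'\in\mathcal A_s}a_{sk'})$. This coefficient is strictly positive, because $w_s\in\{0.2,0.5\}$ and $a_{sk}>0$ whenever $|\Delta_{sk}|>m>0$. Therefore $\partial\mathcal L_{\mathrm{pref}}/\partial D_{sk}$ has the same sign as $\sigma(D_{sk})-y_{sk}$.

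\textbf{Beneficial edit ($y_{sk}=1$).} Here $\sigma(D_{sk})-1<0$ strictly. A gradient step on $D_{sk}$, or an infinitesimal descent step, therefore increases the margin.

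\textbf{Detrimental edit ($y_{sk}=0$).} Here $\sigma(D_{sk})>0$, so the derivative is positive and descent decreases the margin.

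To phrase the result in terms of parameters rather than the margin itself, we chain through the logits. We have $\partial D_{sk}/\partial z_{t,\widetilde x^{\mathrm{out}}_t}=1/|\mathcal C_{sk}|$ and $\partial D_{sk}/\partial z_{t,x^{\mathrm{in}}_t}=-1/|\mathcal C_{sk}|$. On changed positions these two indices are distinct, so the gradient with respect to the logits is nonzero. Moving along the negative gradient then changes $D_{sk}$ at first order by $-\eta\,(\partial\mathcal L/\partial D_{sk})\,\|\nabla_z D_{sk}\|^2$, which has the claimed sign.

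The main obstacle is the interpretation of ``gradient descent''. Other roles, other coordinates, and other loss terms share parameters and could move $D_{sk}$ in either direction. I would therefore state the claim as concerning the direction induced by the $\mathcal L_{\mathrm{pref}}$ gradient, either in logit space or for a sufficiently small step in that coordinate's own contribution. I would also note explicitly the stop-gradient convention on $y_{sk}$, $a_{sk}$, and the argmax indices, since that convention is what makes the monotone sign argument valid.
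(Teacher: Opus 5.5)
Your proposal is correct and follows the paper's proof. Both reduce the claim to $\partial\ell_{sk}/\partial D_{sk}=\sigma(D_{sk})-y_{sk}$ and read off its sign in the two active cases $y_{sk}=1$ and $y_{sk}=0$. Your extra steps make explicit what the paper leaves implicit without changing the argument: the positivity of the aggregation coefficient $w_s a_{sk}/(N\sum_{k'\in\mathcal A_s}a_{sk'})$, the stop-gradient treatment of $y_{sk}$, $a_{sk}$ and the argmax indices, the chain rule into logit space, and the caveat that shared parameters and other loss terms can also move $D_{sk}$.
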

\begin{proof}
Differentiating Eq.~(\ref{eq:app_pref_bce}) yields
\begin{equation}
\frac{\partial\ell_{sk}}{\partial D_{sk}}=\sigma(D_{sk})-y_{sk}.
\end{equation}
If $y_{sk}=1$, this derivative is negative and gradient descent increases
$D_{sk}$.  If $y_{sk}=0$, it is positive and gradient descent decreases
$D_{sk}$.  Coordinates inside the dead zone
$|\Delta_{sk}|\leq0.005$ are inactive and receive no preference gradient.
\end{proof}

\begin{proposition}[Correctness-gated entropy]
The final-coordinate entropy term cannot directly sharpen a digit whose
digit-vocabulary argmax differs from the selected target digit.
\end{proposition}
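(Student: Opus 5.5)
The plan is to write $\mathcal H_{\mathrm{fin}}$ explicitly as a gated average and show that, in the gradient with respect to the logits, every position whose digit-vocabulary argmax differs from its target carries an indicator equal to zero. From the description in Section~\ref{app:bdr_details}, for each $t\in\mathcal M_b$ let $\widehat{x}_t=\arg\max_{v\in\mathcal V_{\mathrm{digit}}}z_{t,v}$, and let $y_t$ be the target digit derived from $\mathbf b^{\mathrm{tar}}$. Define the gate $g_t=\mathbf 1[\widehat{x}_t=y_t]$. Then
\begin{equation}
\mathcal H_{\mathrm{fin}}=\frac{\sum_{t\in\mathcal M_b}g_t\,H(p_t)}{\max\!\left(1,\sum_{t\in\mathcal M_b}g_t\right)},
\end{equation}
with $p_t$ the ten-digit softmax of Eq.~(\ref{eq:app_entropy}). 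If a different normalization is used, the argument below is unchanged, because only the presence of the factor $g_t$ matters.

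The key steps are as follows. First, I would note that $g_t$ is piecewise constant in $\mathbf z$ and is treated as non-differentiable. It is also a detached selection, like the stop-gradient target of Eq.~(\ref{eq:app_target}), which itself depends on the argmax candidate. Hence $g_t$ contributes no gradient, and the normalizer is likewise detached. Second, I would differentiate:
\begin{equation}
\frac{\partial\mathcal H_{\mathrm{fin}}}{\partial z_{t,v}}=\frac{g_t}{\max(1,\sum_{t'}g_{t'})}\,\frac{\partial H(p_t)}{\partial z_{t,v}}.
\end{equation}
Cross-position terms vanish, since $H(p_{t'})$ depends only on $z_{t'}$. Third, at any position with $\widehat{x}_t\neq y_t$ we have $g_t=0$, so the whole logit vector $z_t$ receives zero gradient from this term. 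Gradient descent on $0.2\,\mathcal H_{\mathrm{fin}}$ therefore cannot lower $H(p_t)$ at that position, so it does not sharpen the incorrect digit. For completeness, I would add the positive direction: where $g_t=1$, the standard identity $\partial H/\partial z_{t,v}=-p_t(v)\bigl(\log p_t(v)+H(p_t)\bigr)$ shows that descent moves mass toward the already-dominant digit, which here is the target.

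The main obstacle is making precise what ``directly'' means. The gated-off positions can still change through other loss terms, and through shared parameters $\phi$ in the backbone, since the gradient on gated-in positions propagates to all logits. I would therefore state and prove the claim at the level of the logit-space gradient of $\mathcal H_{\mathrm{fin}}$ at each position. I would add a remark that indirect coupling through $\phi$ is not excluded, and that the proposition, as worded, concerns only the direct term.
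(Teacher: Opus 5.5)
Your proposal is correct and follows the paper's argument. The gate admits a final digit position only when its digit-vocabulary argmax equals the selected target digit, so an incorrect position is excluded from $\mathcal H_{\mathrm{fin}}$ and receives zero gradient from that term; your explicit logit-space gradient with a detached gate, and your caveat that coupling through shared adapter parameters is not excluded, just make precise the ``directly'' that the paper's shorter proof leaves implicit.
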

\begin{proof}
The gate includes a final digit position only when its current argmax equals
the selected target.  An incorrect position is therefore excluded from
$\mathcal H_{\mathrm{fin}}$ and has zero gradient from this term.  Included
positions minimize entropy only after satisfying the discrete correctness
condition.  The historical entropy term is ungated but contains no absolute
coordinate target.
\end{proof}

\paragraph{Training and inference separation.}
The annotation $\mathbf b^{*}$ is used in training to select targets and
compute edit labels.  At inference, target selection, entropy gating, and
preference labels are disabled; the model observes only $(I,q)$ and the
current response.

\section{Additional Experimental Results}
\label{app:additional_results}

\subsection{Dataset Accounting}
\label{app:dataset_accounting}

The archived localization evaluation contains the following eight standard
splits.  The high-precision, controlled-correction, and ablation studies use
the five test splits shown in the final column.
\begin{table}[!htbp]
\centering
\setlength{\belowcaptionskip}{5pt}
\caption{Dataset composition used by the archived evaluations.}
\label{tab:app_dataset}
\small
\begin{tabular}{l l r c c}
\toprule
Dataset & Split & Samples & Eight-split evaluation & Five-test evaluation\\
\midrule
RefCOCO & val & 10,834 & \checkmark & --\\
RefCOCO & testA & 5,657 & \checkmark & \checkmark\\
RefCOCO & testB & 5,095 & \checkmark & \checkmark\\
RefCOCO+ & val & 10,758 & \checkmark & --\\
RefCOCO+ & testA & 5,726 & \checkmark & \checkmark\\
RefCOCO+ & testB & 4,889 & \checkmark & \checkmark\\
RefCOCOg & val & 4,896 & \checkmark & --\\
RefCOCOg & test & 9,602 & \checkmark & \checkmark\\
\midrule
\textbf{Total} & eight / five splits & \textbf{57,457 / 30,969}
& &\\
\bottomrule
\end{tabular}
\end{table}

For the main experiment, \textit{Overall} is the arithmetic mean of the
eight split-level $\mathrm{Acc}@0.5$ values.  The RER aggregate in
Table~\ref{tab:app_stage1} is computed jointly over all 57,457 localization
examples.  Stricter-IoU, controlled-correction, meaningful-state ablation,
and state-source aggregates are computed jointly over the 30,969 examples
from the five test splits.  The segmentation experiment uses 57,457
expression instances and retains this denominator separately from the
localization tables.

\subsection{RER Localization State Quality}
\label{app:stage1_results}

Table~\ref{tab:app_stage1} reports the terminal answer extracted from the
RER state before BDR refinement.  Unparseable predictions are assigned zero
IoU and remain in the split denominators.
\begin{table}[!htbp]
\centering
\setlength{\belowcaptionskip}{5pt}
\caption{RER terminal localization on the eight standard splits. The
aggregate row is computed jointly across the splits.}
\label{tab:app_stage1}
\small
\begin{tabular}{llrrrrr}
\toprule
Dataset & Split & Samples & mIoU & Acc@0.5 & Acc@0.7 & Acc@0.9\\
\midrule
RefCOCO & val & 10,834 & 83.35 & 90.10 & 83.81 & 63.61\\
RefCOCO & testA & 5,657 & 84.09 & 90.75 & 85.56 & 64.27\\
RefCOCO & testB & 5,095 & 81.37 & 88.38 & 80.73 & 59.02\\
RefCOCO+ & val & 10,758 & 77.14 & 82.57 & 76.15 & 57.82\\
RefCOCO+ & testA & 5,726 & 79.32 & 85.23 & 79.03 & 59.10\\
RefCOCO+ & testB & 4,889 & 74.18 & 80.18 & 72.20 & 53.12\\
RefCOCOg & val & 4,896 & 82.49 & 89.22 & 82.15 & 60.72\\
RefCOCOg & test & 9,602 & 82.48 & 89.28 & 82.70 & 61.45\\
\midrule
\textbf{Aggregate} & \textbf{all splits} & \textbf{57,457}
& \textbf{80.68} & \textbf{87.06} & \textbf{80.48} & \textbf{60.23}\\
\bottomrule
\end{tabular}
\end{table}

Terminal answer quality alone does not determine the final performance of a
source-matched state--editor pipeline. Table~\ref{tab:app_stage1_backbone} contrasts the native grounding
output of Qwen3.5-9B, the same model under the structured RER interface without
RER adaptation, the trained RER state, and refinement from the two state
sources. The structured interface alone is substantially weaker than the
native grounding interface, whereas RER adapts the policy to the explicit
trajectory format and recovers most of the terminal localization quality.
Each state source is paired with a separately trained BDR. The native-output
pipeline starts at 82.16\% terminal mIoU, whereas the RER pipeline starts at
80.70\%. After source-matched refinement, RER + BDR gains 5.02 points to
reach 85.72\% mIoU and attains 68.64\% $\mathrm{Acc}@0.9$. The native-output
pipeline gains 2.41 points to reach 84.57\% mIoU and attains 66.32\%
$\mathrm{Acc}@0.9$. This comparison highlights the effectiveness of the RER$\rightarrow$BDR
pairing: despite starting from a weaker terminal box, it obtains larger
refinement gains and higher final localization quality. The zero-anchor
ablation in Fig.~\ref{fig:zero_ablation} provides complementary evidence that a
meaningful state and its anchor-aware objectives matter to the complete system.

\begin{table*}[ht]
\centering
\setlength{\belowcaptionskip}{5pt}
\caption{Terminal grounding and source-matched refinement under the
five-test metric aggregation. Metrics are computed jointly across the splits;
invalid or unparseable outputs receive zero IoU, and BDR rows use the
second refinement pass. Each refined state source uses a separately trained BDR,
and the native and structured rows use their corresponding prompts in
Section~\ref{app:prompt_protocol}.}
\label{tab:app_stage1_backbone}
\scriptsize
\setlength{\tabcolsep}{4.5pt}
\begin{tabular}{lllrrr}
\toprule
State source & Interface & RER adaptation & mIoU & $\mathrm{Acc}@0.7$ & $\mathrm{Acc}@0.9$\\
\midrule
Qwen3.5-9B & native output & No & 82.16 & 83.31 & 60.64\\
Structured Qwen3.5-9B & structured & No & 63.48 & 57.08 & 24.28\\
RER terminal answer (ours) & structured & Yes & 80.70 & 80.56 & 59.81\\
Native-Output + BDR (ours) & native-output-derived & No & \underline{84.57} & \underline{86.56} & \underline{66.32}\\
\textbf{RER state + BDR (ours)} & structured & Yes & \textbf{85.72} & \textbf{87.94} & \textbf{68.64}\\
\bottomrule
\end{tabular}
\end{table*}

Table~\ref{tab:app_ablation_split} gives the split-level results of the
meaningful-state ablation.
\begin{table*}[ht]
\centering
\setlength{\belowcaptionskip}{5pt}
\caption{Split-level meaningful-state ablation. The upper block reports mIoU / $\mathrm{Acc}@0.5$ and the lower block reports $\mathrm{Acc}@0.7$ / $\mathrm{Acc}@0.9$.}
\label{tab:app_ablation_split}
\scriptsize
\setlength{\tabcolsep}{3.0pt}
\begin{tabular*}{\textwidth}{@{\extracolsep{\fill}}lccccc@{}}
\toprule
Method & RefCOCO A & RefCOCO B & RefCOCO+ A & RefCOCO+ B & RefCOCOg test \\
\midrule
\textbf{CoEvolve, mIoU / Acc@0.5}
& \textbf{88.85 / 95.28} & \textbf{85.41 / 92.38}
& \textbf{87.12 / 93.45} & \textbf{81.23 / 87.79}
& \textbf{85.47 / 92.75} \\
Zero-Anchor BDR, mIoU / Acc@0.5
& \underline{87.91 / 94.22} & \underline{84.24 / 90.62}
& \underline{86.00 / 92.19} & \underline{78.89 / 84.27}
& \underline{81.34 / 89.79} \\
\midrule
\textbf{CoEvolve, Acc@0.7 / Acc@0.9}
& \textbf{92.65 / 74.24} & \textbf{87.05 / 67.79}
& \textbf{90.67 / 73.11} & \textbf{82.33 / 64.68}
& \textbf{86.86 / 65.15} \\
Zero-Anchor BDR, Acc@0.7 / Acc@0.9
& \underline{90.26 / 69.91} & \underline{84.02 / 65.32}
& \underline{87.91 / 67.69} & \underline{77.79 / 60.20}
& \underline{81.97 / 53.62} \\
\bottomrule
\end{tabular*}
\end{table*}

\FloatBarrier

\subsection{Qualitative Grounding Results}
\label{app:qualitative_results}

Figure~\ref{fig:app_structured_edit} exposes one complete state transition.
The RER response follows the required \structtag{refine} and
\structtag{answer} format, and every reasoning Step terminates in an explicit
region.  BDR then edits coordinates at the same response positions: the
coordinate matrix shows that refinement can update historical Steps, the last
Step, and the Answer while retaining the surrounding reasoning text.
\begin{figure}[H]
    \centering
    \includegraphics[width=\linewidth]{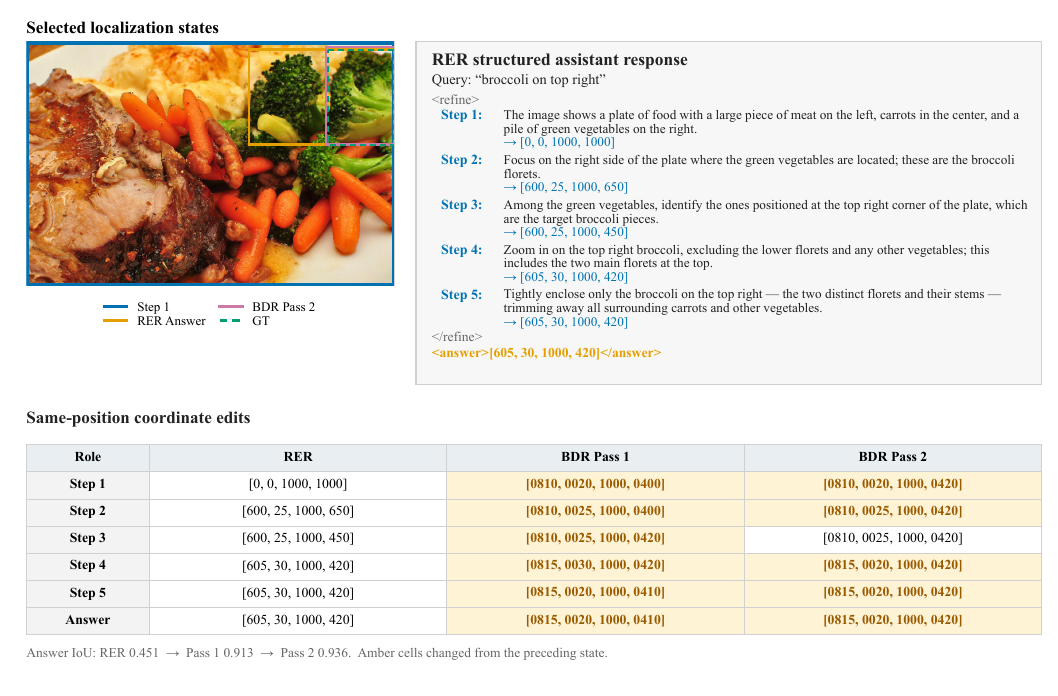}
    \caption{A structured RER trajectory and its subsequent BDR coordinate edits
    for the query ``broccoli on top right.'' RER constructs a progressive
    localization trajectory, while BDR performs same-position coordinate
    refinement across multiple editable roles and substantially improves the
    final Answer. The image shows the first RER region, RER Answer, second-pass
    BDR Answer, and annotation. The lower matrix traces all coordinate states;
    amber cells mark edits from the preceding state.}
    \label{fig:app_structured_edit}
\end{figure}

\clearpage
Figure~\ref{fig:app_qualitative_natural} visualizes additional RER-to-BDR
localization updates on natural images.  The cases are chosen to expose
distinct correction behaviors rather than to estimate their frequency.  Each
panel preserves the full image, and its IoU is computed from the archived box
shown in that panel.

\begin{figure}[H]
    \centering
    \includegraphics[width=\linewidth]{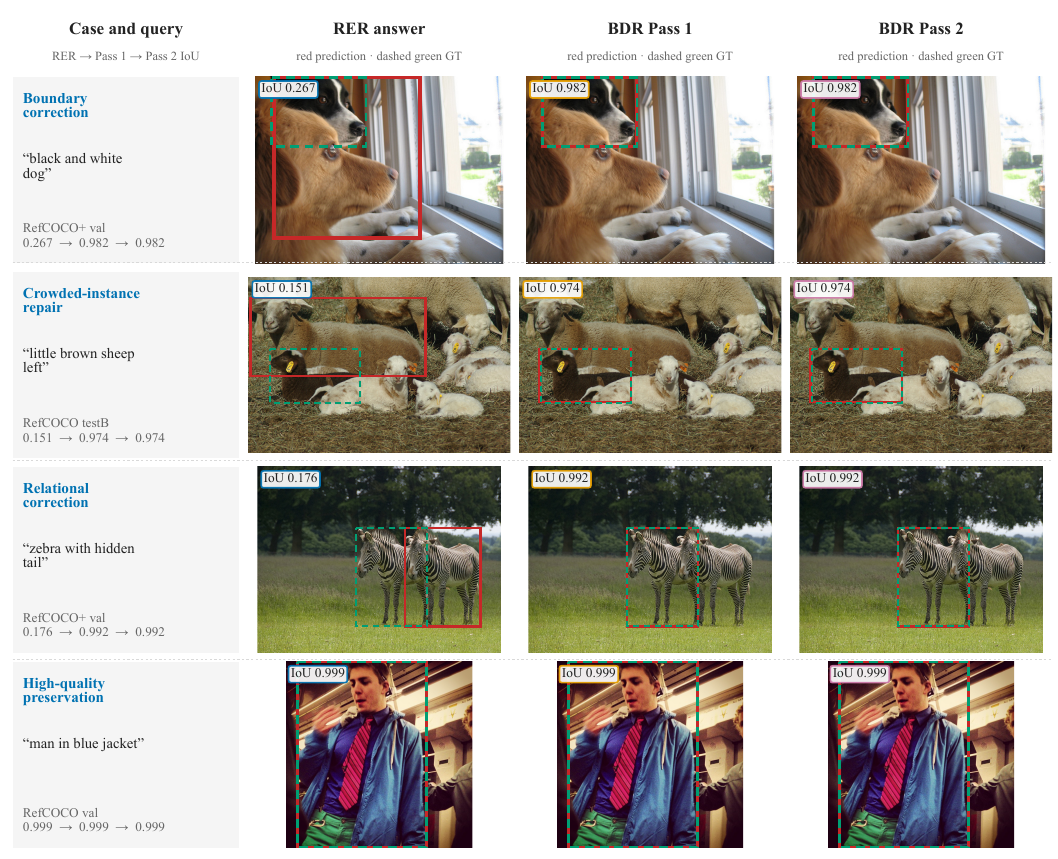}
    \caption{Qualitative results on the RefCOCO family. Red solid boxes
    show model predictions, green dashed boxes show annotations, and badges
    report per-example IoU. The examples illustrate boundary correction,
    disambiguation among nearby instances, relational
    correction, and preservation of an accurate RER anchor.}
    \label{fig:app_qualitative_natural}
\end{figure}

\clearpage
\subsection{Controlled Correction Details}
\label{app:strict_and_controlled}

The controlled inputs are one deterministic nonzero perturbation per example,
with target-IoU centers $0.20,0.40,0.60,0.80,0.94$ and four transformation
types (translation, scale, aspect-ratio, and mixed).  The same anchor is
given to every model.  CoEvolve uses one BDR pass in this experiment.  Its
$\mathrm{Repair}@0.7$ is $87.99\%$, defined on the subset whose input IoU is below
$0.7$.
Using the fixed seed 20260803, the construction procedure deterministically
assigns each example a target-IoU band, perturbation type, and transformation
parameters.

Figure~\ref{fig:app_controlled_construction} summarizes the benchmark
construction.  The five test splits contain 30,969 examples and have closely
matched input mIoU values, yielding an aggregate input mIoU of $58.80$ over
all examples.
The deterministic assignment is approximately uniform across the five target
IoU bands and the four perturbation types, preventing the aggregate result
from being dominated by one corruption severity or geometric transformation.

\begin{figure}[!htbp]
    \centering
    \includegraphics[width=0.82\textwidth]{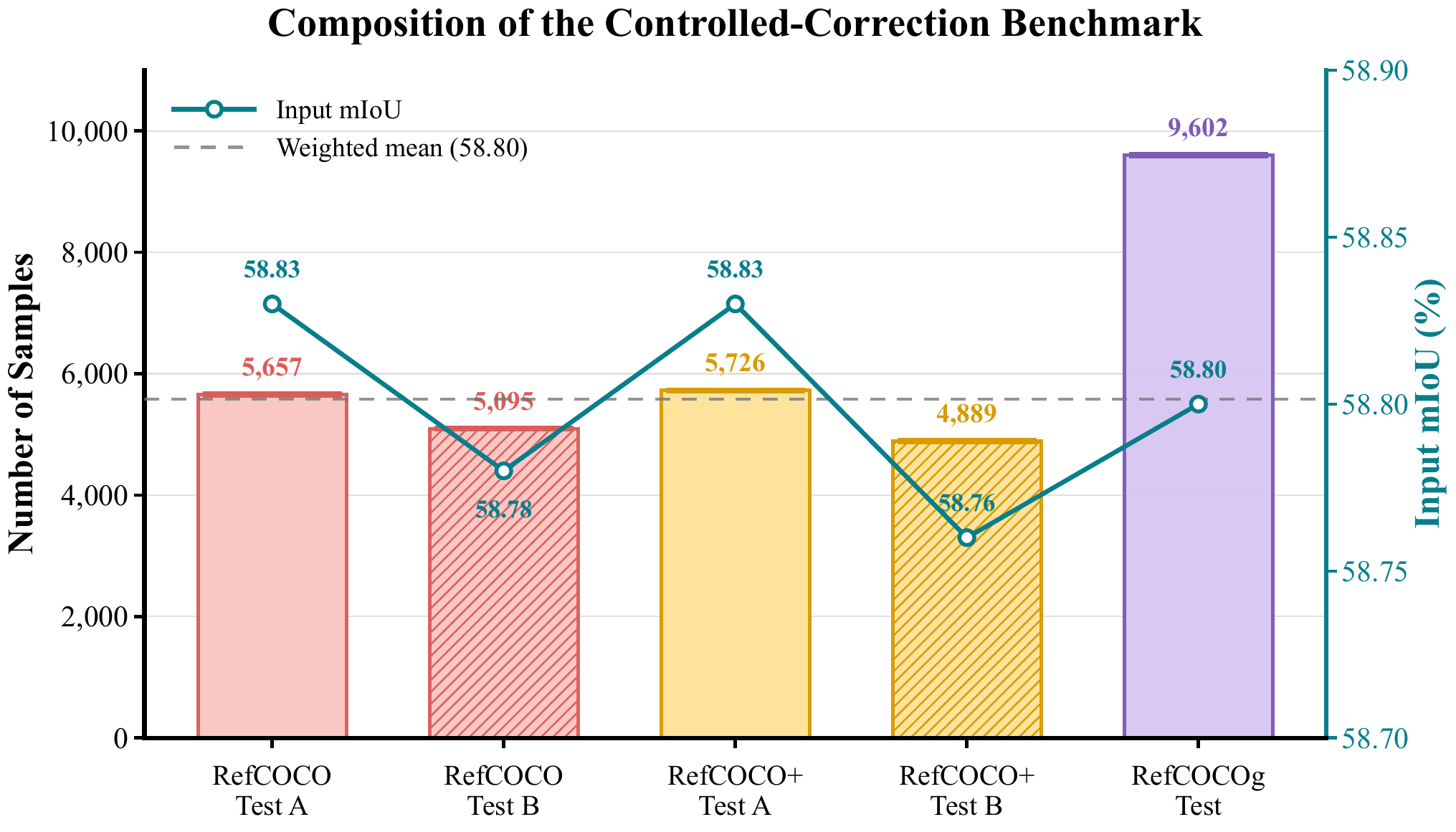}\par\medskip
    \includegraphics[width=0.82\textwidth]{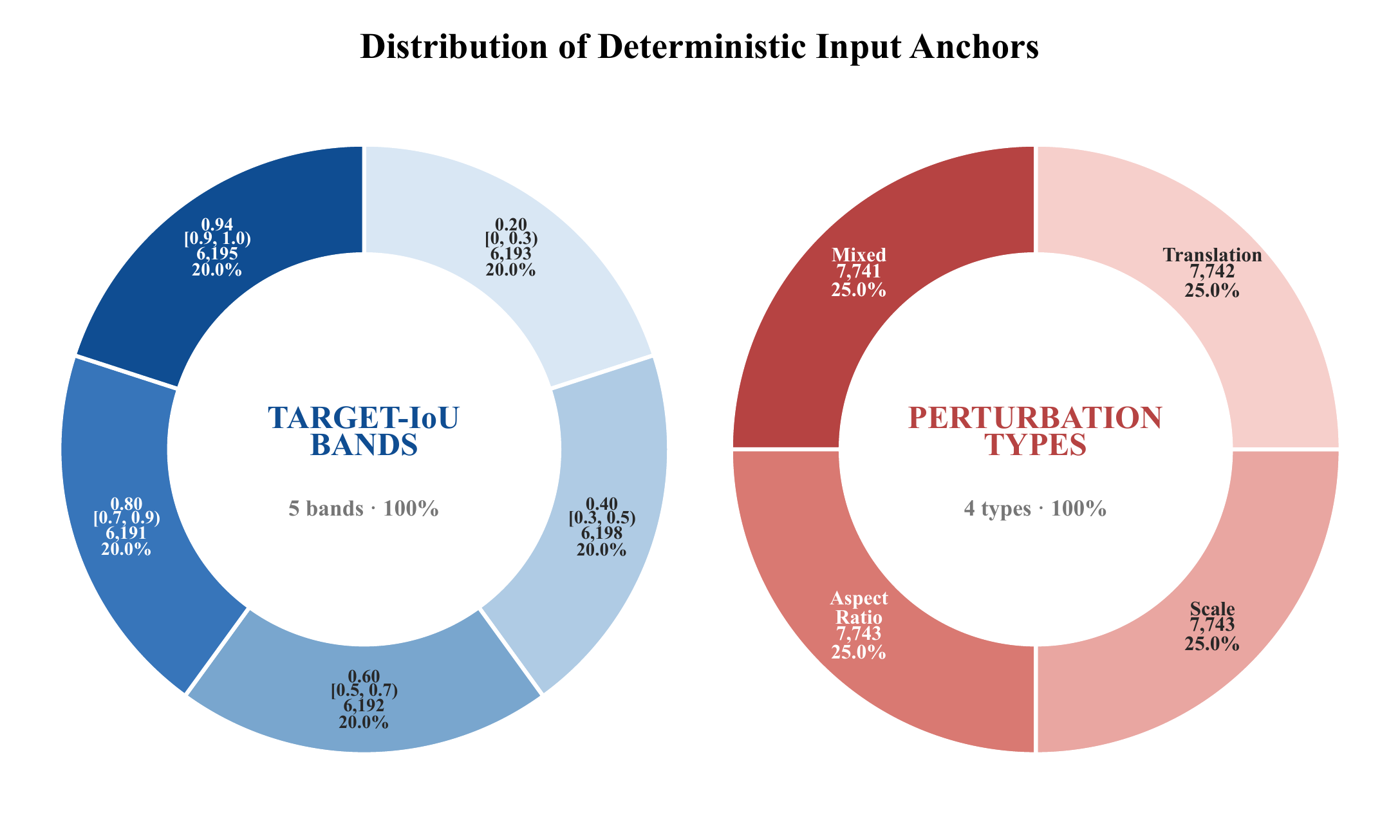}
    \caption{Construction of the controlled-correction benchmark.  The upper
    panel reports the number of examples and input mIoU for each test split;
    the dashed line marks the aggregate input mIoU over all examples.  The lower panel
    reports the assignment across target IoU bands and perturbation types.
    Each example contributes one deterministic nonzero anchor.}
    \label{fig:app_controlled_construction}
\end{figure}

\FloatBarrier

\subsection{Remote-Sensing Grounding}
\label{app:remote}

DIOR-RSVG evaluation contains 7,500 expression instances.  The main paper
compares CoEvolve with prior methods, while Table~\ref{tab:app_remote} traces
the three saved localization states. Unparseable outputs remain in the
denominator and receive zero IoU.
RER and BDR retain the natural-image reward, loss, and two-pass editing
definitions while being retrained on the DIOR-RSVG training split. Their
domain-specific optimization configurations are provided in the anonymous
supplementary code.
The reproduced native Qwen3.5-9B row in the main comparison is evaluated
without DIOR-RSVG adaptation and is included as a base-to-system reference,
not as a matched-training architectural comparison.
\begin{table}[!htbp]
\centering
\setlength{\belowcaptionskip}{5pt}
\caption{CoEvolve localization-state progression on DIOR-RSVG test expressions. RER and BDR are retrained on the DIOR-RSVG training split.}
\label{tab:app_remote}
\scriptsize
\setlength{\tabcolsep}{3.0pt}
\begin{tabular}{lrrrr}
\toprule
State & mIoU & $\mathrm{Pr}@0.5$ & $\mathrm{Pr}@0.7$
& $\mathrm{Pr}@0.9$\\
\midrule
RER output & 58.2248 & 64.2400 & 51.1200 & 23.0400\\
BDR Pass 1 & 75.4161 & 85.2533 & 74.6267 & 41.5200\\
\textbf{BDR Pass 2} & \textbf{76.2295} & \textbf{86.2000}
& \textbf{76.1600} & \textbf{43.4533}\\
\bottomrule
\end{tabular}
\end{table}

Figure~\ref{fig:app_qualitative_remote} complements the aggregate progression
with successful correction modes and a non-monotonic failure case.
\begin{figure}[H]
    \centering
    \includegraphics[width=0.82\linewidth]{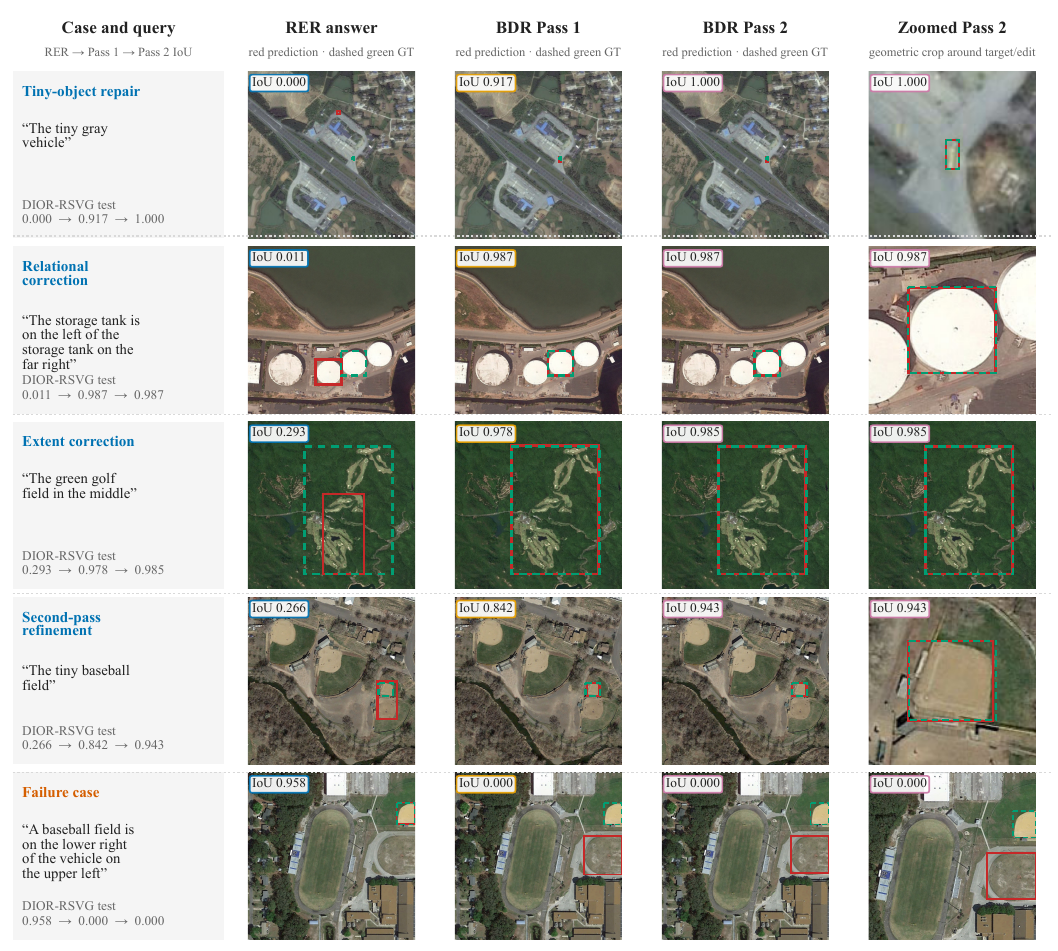}
    \caption{Qualitative results on DIOR-RSVG. Red solid boxes show model
    predictions and green dashed boxes show annotations. The first four rows show repair
    of a tiny object, a relationally specified instance, object extent, and a
    boundary further improved by the second BDR pass.  The final row shows a
    failure in which refinement moves away from an initially accurate anchor.
    The rightmost column is a geometric crop around the annotation and final
    edit; no contrast enhancement is applied.}
    \label{fig:app_qualitative_remote}
\end{figure}

\FloatBarrier

\subsection{Downstream Segmentation}
\label{app:segmentation}

We feed each final localization box to a frozen SAM3 without mask fine-tuning.
The segmentation archive contains 57,457 expression instances.
Table~\ref{tab:app_segmentation_comparison} reports split-level cIoU and the
arithmetic mean across the eight splits.

\begin{table}[!htbp]
\centering
\setlength{\belowcaptionskip}{5pt}
\caption{Referring-expression segmentation cIoU on the eight standard
splits.  Public methods use their reported training and pretraining
protocols; Macro-8 is the arithmetic mean of the eight split values.
Best and second-best results are bold and underlined.}
\label{tab:app_segmentation_comparison}
\scriptsize
\setlength{\tabcolsep}{2.0pt}
\begin{tabular*}{\textwidth}{@{\extracolsep{\fill}}lrrrrrrrrr@{}}
\toprule
& \multicolumn{3}{c}{RefCOCO} & \multicolumn{3}{c}{RefCOCO+}
& \multicolumn{2}{c}{RefCOCOg} & \\
\cmidrule(lr){2-4}\cmidrule(lr){5-7}\cmidrule(lr){8-9}
Method & val & A & B & val & A & B & val & test & Macro-8\\
\midrule
CRIS~\citep{wang2022cris} & 70.47 & 73.18 & 66.10 & 62.27 & 68.08
& 53.68 & 59.87 & 60.36 & 64.25\\
LAVT~\citep{yang2022lavt} & 72.73 & 75.82 & 68.79 & 62.14 & 68.38
& 55.10 & 61.24 & 62.09 & 65.79\\
CGFormer~\citep{tang2023cgformer} & 74.75 & 77.30 & 70.64 & 64.54
& 71.00 & 57.14 & 64.68 & 65.09 & 68.14\\
ReLA~\citep{liu2023gres} & 73.82 & 76.48 & 70.18 & 66.04 & 71.02
& 57.65 & 65.00 & 65.97 & 68.27\\
MagNet~\citep{chng2024magnet} & 75.24 & 78.24 & 71.05 & 66.16 & 71.32
& 58.14 & 65.36 & 66.03 & 68.94\\
LISA~\citep{lai2024lisa} & 74.90 & 79.10 & 72.30 & 65.10 & 70.80
& 58.10 & 67.90 & 70.60 & 69.85\\
PolyFormer-B~\citep{liu2023polyformer} & 74.82 & 76.64 & 71.06 & 67.64
& 72.89 & 59.33 & 67.76 & 69.05 & 69.90\\
OneRef-L~\citep{xiao2024oneref} & 80.48 & 82.78 & 78.27 & 74.25
& 78.41 & 69.85 & 74.91 & 77.36 & 77.04\\
EVF-SAM~\citep{zhang2024evfsam} & 82.10 & 83.70 & 80.00 & 75.20
& 78.30 & 70.10 & 76.80 & 77.40 & 77.95\\
\textbf{CoEvolve + SAM3 (ours)} & 80.28 & 82.81 & 77.87 & 76.22
& 80.60 & 72.42 & 77.06 & 78.23 & 78.19\\
LENS~\citep{zhu2026lens} & 84.20 & 85.30 & 81.00 & 79.40
& 82.80 & 74.30 & 81.20 & 81.00 & 81.15\\
HyperSeg~\citep{wei2025hyperseg} & 84.80 & 85.70 & 83.40 & 79.00
& 83.50 & 75.20 & 79.40 & 78.90 & 81.24\\
InstructSeg~\citep{wei2025instructseg} & \textbf{85.80} & 86.60
& \underline{84.00} & \underline{80.10} & \underline{83.80}
& \underline{75.60} & 79.30 & 80.30 & 81.94\\
X-SAM~\citep{wang2026xsam} & \underline{85.10} & \underline{87.10}
& 83.40 & 78.00 & 81.00 & 74.40 & \underline{83.80}
& \underline{83.90} & \underline{82.09}\\
FlowSeg~\citep{zhang2026flowseg} & \textbf{85.80} & \textbf{87.40}
& \textbf{84.80} & \textbf{80.20} & \textbf{84.50}
& \textbf{76.90} & \textbf{86.50} & \textbf{86.10} & \textbf{84.03}\\
\bottomrule
\end{tabular*}
\end{table}

\section{Limitations}
\label{app:limitations}

\paragraph{Imperfect preserve-or-edit decisions.}
BDR improves correction behavior on average but does not guarantee an
improvement for every example.  In controlled correction, refinement degrades
$15.50\%$ of the anchors, compared with $11.42\%$ for Qwen3.5-9B.
Quality-gated targets encourage preservation of accurate inputs, but the
learned preserve-or-edit decision remains imperfect for a residual subset.
Calibrated edit acceptance, uncertainty-aware refinement, or adaptive stopping
could retain the current state when a further update is unlikely to help.

\paragraph{State--editor pairing and broader transfer.}
The source-matched comparison shows that RER$\rightarrow$BDR achieves stronger
final refinement than the separately trained native-output pipeline. This result
motivates further study of how state representations interact with their
corresponding editors, particularly across upstream model families and response
formats.

\paragraph{Coordinate-centric editing.}
BDR treats reasoning text as fixed semantic context and edits the coordinate
fields of the localization state. This design keeps the semantic context fixed during position-aligned refinement, but semantic errors that cannot be resolved through coordinate correction remain outside the current editor’s scope. The current objective also does not
explicitly enforce semantic-role consistency between an earlier reasoning step
and its edited historical box. Extending bidirectional editing
toward jointly editable semantic--spatial states is a promising direction.

\end{document}